  \theoremstyle{remark}
  \newtheorem*{claim*}{\protect\claimname}
  \providecommand{\claimname}{Claim}
\begin{document}

\title{An Analytically Tractable Bayesian Approximation to Optimal Point
Process Filtering}

\author{Yuval Harel\\
Department of Electrical Engineering\\
Technion -- Israel Institute of Technology\\
Technion City, Haifa, Israel\\
\href{mailto:yharel@tx.technion.ac.il}{yharel@tx.technion.ac.il}\\
\and Ron Meir\\
Department of Electrical Engineering\\
Technion -- Israel Institute of Technology\\
Technion City, Haifa, Israel\\
\href{mailto:rmeir@ee.technion.ac.il}{rmeir@ee.technion.ac.il}\\
\and Manfred Opper\\
Department of Artifcial Intelligence\\
Technical University Berlin\\
Berlin 10587, Germany\\
\href{mailto:rmeir@ee.technion.ac.il}{opperm@cs.tu-berlin.de}}
\maketitle
\begin{abstract}
The process of dynamic state estimation (filtering) based on point
process observations is in general intractable. Numerical sampling
techniques are often practically useful, but lead to limited conceptual
insight about optimal encoding/decoding strategies, which are of significant
relevance to Computational Neuroscience. We develop an analytically
tractable Bayesian approximation to optimal filtering based on point
process observations, which allows us to introduce distributional
assumptions about sensory cell properties, that greatly facilitates
the analysis of optimal encoding in situations deviating from common
assumptions of uniform coding. The analytic framework leads to insights
which are difficult to obtain from numerical algorithms, and is consistent
with experiments about the distribution of tuning curve centers. Interestingly,
we find that the information gained from the absence of spikes may
be crucial to performance. 
\end{abstract}

\section{Introduction}

The task of inferring a hidden dynamic state based on partial noisy
observations plays an important role within both applied and natural
domains. A widely studied problem is that of online inference of the
hidden state at a given time based on observations up to to that time,
referred to as \emph{filtering} \cite{AndMoo05}. For the linear setting
with Gaussian noise and quadratic cost, the solution is well known
since the early 1960s both for discrete and continuous times, leading
to the celebrated Kalman and the Kalman-Bucy filters \cite{KalBuc61,Kalman60},
respectively. In these cases the exact posterior distribution is Gaussian,
resulting in closed form recursive update equations for the mean and
variance of this distribution, implying finite-dimensional filters.
However, beyond some very specific settings \cite{Daum05}, the optimal
filter is infinite-dimensional and impossible to compute in closed
form, requiring either approximate analytic techniques (e.g., the
extended Kalman filter (e.g., \cite{AndMoo05}), the unscented filter
\cite{JulUhl00}) or numerical procedures (e.g., particle filters
\cite{DouJoh09}). The latter usually require time discretization
and a finite number of particles, resulting in loss of precision .
For many practical tasks (e.g., queuing \cite{Bremaud81} and optical
communication \cite{SnyMil91}) and biologically motivated problems
(e.g., \cite{DayAbb05}) a natural observation process is given by
a point process observer, leading to a nonlinear infinite-dimensional
optimal filter (except in specific settings, e.g., finite state spaces,
\cite{Bremaud81,BobMeiEld09}). 

We consider a continuous-state and continuous-time multivariate hidden
Markov process observed through a set of sensory neuron-like elements
characterized by multi-dimensional tuning functions, representing
the elements' average firing rate. The tuning function parameters
are characterized by a distribution allowing much flexibility. The
actual firing of each cell is random and is given by a Poisson process
with rate determined by the input and by the cell's tuning function.
Inferring the hidden state under such circumstances has been widely
studied within the Computational Neuroscience literature, mostly for
static stimuli. In the more challenging and practically important
dynamic setting, much work has been devoted to the development of
numerical sampling techniques for fast and effective approximation
of the posterior distribution (e.g., \cite{Ahmadian2011}). In this
work we are less concerned with algorithmic issues, and more with
establishing closed-form analytic expressions for an approximately
optimal filter (see \cite{BobMeiEld09,SusMeiOpp11,SusMeiOpp13} for
previous work in related, but more restrictive settings), and using
these to characterize the nature of near-optimal encoders, namely
determining the structure of the tuning functions for optimal state
inference. A significant advantage of the closed form expressions
over purely numerical techniques is the insight and intuition that
is gained from them about \textit{qualitative} aspects of the system.
Moreover, the leverage gained by the analytic computation contributes
to reducing the variance inherent to Monte Carlo approaches. Technically,
given the intractable infinite-dimensional nature of the posterior
distribution, we use a projection method replacing the full posterior
at each point in time by a projection onto a simple family of distributions
(Gaussian in our case). This approach, originally developed in the
Filtering literature \cite{Maybeck79,BriHanLeg98}, and termed Assumed
Density Filtering (ADF), has been successfully used more recently
in Machine Learning \cite{Opper98,Minka01}. \textcolor{black}{As
far as we are aware, this is the first application of this methodology
to point process filtering. }

The main contributions of the paper are the following: (i) Derivation
of closed form recursive expressions for the continuous time posterior
mean and variance within the ADF approximation, allowing for the incorporation
of distributional assumptions over sensory variables. (ii) Characterization
of the optimal tuning curves (encoders) for sensory cells in a more
general setting than hitherto considered. Specifically, we study the
optimal shift of tuning curve centers, providing an explanation for
observed experimental phenomena \cite{HarMcAlp04}. (iii) Demonstration
that absence of spikes is informative, and that, depending on the
relationship between the tuning curve distribution and the dynamic
process (the `prior'), may significantly improve the inference. This
issue has not been emphasized in previous studies focusing on homogeneous
populations. 

We note that most previous work in the field of neural encoding/decoding
has dealt with static observations and was based on the Fisher information,
which often leads to misleading qualitative results (e.g., \cite{Bethge2002,YaeMei10}).
Our results address the full dynamic setting in continuous time, and
provide results for the posterior variance, which is shown to yield
an excellent approximation of the posterior Mean Square Error (MSE).
Previous work addressing non-uniform distributions over tuning curve
parameters \cite{GanSim14} used static univariate observations and
was based on Fisher information rather than the MSE itself.

\section{Problem formulation}

\subsection{Dense Gaussian neural code}

We consider a dynamical system with state $X_{t}\in\mathbb{R}^{n}$,
observed through an observation process $N$ describing the firing
patterns of sensory neurons in response to the process $X$. The observed
process is a diffusion process obeying the Stochastic Differential
Equation (SDE)
\[
dX_{t}=A\left(X_{t}\right)dt+D\left(X_{t}\right)dW_{t},\quad\left(t\geq0\right)
\]
where $A\left(\cdot\right),D\left(\cdot\right)$ are arbitrary functions
and $W_{t}$ is standard Brownian motion. The initial condition $X_{0}$
is assumed to have a continuous distribution with a known density.
The observation process $N$ is a marked point process \cite{SnyMil91}
defined on $\left[0,\infty\right)\times\mathbb{R}^{m}$, meaning that
each point, representing the firing of a neuron, is identified by
its time $t\in\left[0,\infty\right)$, and a mark $\theta\in\mathbb{R}^{m}$.
In this work the mark is interpreted as a parameter of the firing
neuron, which we refer to as the neuron's \emph{preferred stimulus}.
Specifically, a neuron with parameter $\theta$ is taken to have firing
rate
\[
\lambda\left(x;\theta\right)=h\exp\left(-\frac{1}{2}\left\Vert Hx-\theta\right\Vert _{\Sigma_{\mathrm{tc}}^{-1}}^{2}\right),
\]
in response to state $x$, where $H\in\mathbb{R}^{m\times n}$ and
$\Sigma_{\mathrm{tc}}\in\mathbb{R}^{m\times m}$ , $m\le n$, are
fixed matrices, and the notation $\left\Vert y\right\Vert _{M}^{2}$
denotes $y^{T}My$. The choice of Gaussian form for $\lambda$ facilitates
analytic tractability. The inclusion of the matrix $H$ allows using
high-dimensional models where only some dimensions are observed, for
example when the full state includes velocities but only locations
are directly observable. We also define $N_{t}\triangleq N\left(\left[0,t\right)\times\mathbb{R}^{m}\right)$,
i.e., $N_{t}$ is the total number of points up to time $t$, regardless
of their location $\theta$, and denote by $\mathcal{N}_{t}$ the
\emph{sequence} of points up to time $t$ --- formally, the process
$N$ restricted to $\left[0,t\right)\times\mathbb{R}^{m}$. Following
\cite{Snyder1991}, we use the notation
\begin{equation}
\int_{a}^{b}\int_{U}f\left(t,\theta\right)N\left(dt\times d\theta\right)\triangleq\sum_{i}\boldsymbol{1}\left\{ t_{i}\in\left[a,b\right],\theta_{i}\in U\right\} f\left(t_{i},\theta_{i}\right),\label{eq:point-integral}
\end{equation}
for $U\subseteq\mathbb{R}^{m}$ and any function $f$, where $\left(t_{i},\theta_{i}\right)$
are respectively the time and mark of the $i$-th point of the process
$N$.

Consider a network with $M$ sensory neurons, having random preferred
stimuli $\boldsymbol{\theta}=\left\{ \theta_{i}\right\} {}_{i=1}^{M}$
that are drawn independently from a common distribution with probability
density $f\left(\theta\right)$, which we refer to as the \emph{population
density}. Positing a distribution for the preferred stimuli allows
us to obtain simple closed form solutions, and to optimize over distribution
parameters rather than over the higher-dimensional space of all the
$\theta_{i}$. The total rate of spikes with preferred stimuli in
a set $A\subset\mathbb{R}^{m}$, given $X_{t}=x$, is then $\lambda_{A}\left(x;\boldsymbol{\theta}\right)=h\sum_{i}1_{\left\{ \theta_{i}\in A\right\} }\exp\left(-\frac{1}{2}\left\Vert Hx-\theta_{i}\right\Vert _{\Sigma_{\mathrm{tc}}^{-1}}^{2}\right)$.
Averaging over $f\left(\theta\right)$, we have the expected rate
$\lambda_{A}\left(x\right)\triangleq\mathrm{E}\lambda_{A}\left(x;\boldsymbol{\theta}\right)=hM\int_{A}f\left(\theta\right)\exp\left(-\frac{1}{2}\left\Vert Hx-\theta\right\Vert _{\Sigma_{\mathrm{tc}}^{-1}}^{2}\right)d\theta.$
We now obtain an infinite neural network by considering the limit
$M\to\infty$ while holding $\lambda^{0}=hM$ fixed. In the limit
we have $\lambda_{A}\left(x;\boldsymbol{\theta}\right)\to\lambda_{A}\left(x\right)$,
so that the process $N$ has density
\begin{equation}
\lambda_{t}\left(\theta,X_{t}\right)=\lambda^{0}f\left(\theta\right)\exp\left(-\frac{1}{2}\left\Vert HX_{t}-\theta\right\Vert _{\Sigma_{\mathrm{tc}}^{-1}}^{2}\right),\label{eq:space-time-density}
\end{equation}
meaning that the expected number of points in a small rectangle $\left[t,t+dt\right]\times\prod_{i}\left[\theta_{i},\theta_{i}+d\theta_{i}\right]$,
conditioned on the history $X_{\left[0,t\right]}$, is $\lambda_{t}\left(\theta,X_{t}\right)dt\prod_{i}d\theta_{i}+o\left(dt,\left|d\theta\right|\right)$.
A finite network can be obtained as a special case by taking $f$
to be a sum of delta functions representing a discrete distribution.

For analytic tractability, we assume that $f\left(\theta\right)$
is Gaussian with center $c$ and covariance $\Sigma_{\mathrm{pop}}$,
namely $f\left(\theta\right)={\cal N}(\theta;c,\Sigma_{\mathrm{pop}})$.
We refer to $c$ as the \emph{population center.} Previous work \cite{RhoSny1977,YaeMei10,Susemihl2014}
considered the case where neurons' preferred stimuli uniformly cover
the space, obtained by removing the factor $f\left(\theta\right)$
from \eqref{eq:space-time-density}. Then, the total firing rate $\int\lambda_{t}\left(\theta,x\right)d\theta$
is independent of $x$, which simplifies the analysis, and leads to
a Gaussian posterior (see \cite{RhoSny1977}). We refer to the assumption
that $\int\lambda_{t}\left(\theta,x\right)d\theta$ is independent
of $x$ as \emph{uniform coding}. The uniform coding case may be obtained
from our model by taking the limit $\Sigma_{\mathrm{pop}}^{-1}\to0$
with $\lambda^{0}/\sqrt{\det\Sigma_{\mathrm{pop}}}$ held constant.

\subsection{Optimal encoding and decoding}

We consider the question of optimal encoding and decoding under the
above model. The process of neural decoding is assumed to compute
(exactly or approximately) the full posterior distribution of $X_{t}$
given $\mathcal{N}_{t}$. The problem of neural encoding is then to
choose the parameters $\phi=\left(c,\Sigma_{\mathrm{pop}},\Sigma_{\mathrm{tc}}\right)$,
which govern the statistics of the observation process $N$, given
a specific decoding scheme.

To quantify the performance of the encoding-decoding system, we summarize
the result of decoding using a single estimator $\hat{X}_{t}=\hat{X}_{t}\left(\mathcal{N}_{t}\right)$,
and define the Mean Square Error (MSE) as $\epsilon_{t}\triangleq\mathrm{trace}[(X_{t}-\hat{X}_{t})(X_{t}-\hat{X}_{t})^{T}]$.
We seek $\hat{X}_{t}$ and $\phi$ that solve $\min_{\phi}\lim_{t\to\infty}\min_{\hat{X}_{t}}\mathrm{E}\left[\epsilon_{t}\right]=\min_{\phi}\lim_{t\to\infty}\mathrm{E}[\min_{\hat{X}_{t}}\mathrm{E}[\epsilon_{t}|\mathcal{N}_{t}]]$.
The inner minimization problem in this equation is solved by the MSE-optimal
decoder, which is the posterior mean $\hat{X}_{t}=\mu_{t}\triangleq\mathrm{E}\left[X_{t}|\mathcal{N}_{t}\right]$.
The posterior mean may be computed from the full posterior obtained
by decoding. The outer minimization problem is solved by the optimal
encoder. In principle, the encoding/decoding problem can be solved
for any value of $t$. In order to assess performance it is convenient
to consider the steady-state limit $t\to\infty$ for the encoding
problem. 

Below, we find a closed form approximate solution to the decoding
problem for any $t$ using ADF. We then explore the problem of choosing
the steady-state optimal encoding parameters $\phi$ using Monte Carlo
simulations. Note that if decoding is exact, the problem of optimal
encoding becomes that of minimizing the expected posterior variance.

\section{Neural decoding}

\subsection{Exact filtering equations}

Let $P\left(\cdot,t\right)$ denote the posterior density of $X_{t}$
given $\mathcal{N}_{t}$, and $\mathrm{E}_{P}^{t}\left[\cdot\right]$
the posterior expectation given $\mathcal{N}_{t}$. The prior density
$P\left(\cdot,0\right)$ is assumed to be known.

The problem of filtering a diffusion process $X$ from a doubly stochastic
Poisson process driven by $X$ is formally solved in \cite{Snyder1972}.
The result is extended to marked point processes in \cite{RhoSny1977},
where the authors derive a stochastic PDE for the posterior density\footnote{The model considered in \cite{RhoSny1977} assumes linear dynamics
and uniform coding -- meaning that the total rate of $N_{t}$, namely
$\int_{\theta}\lambda_{t}\left(\theta,X_{t}\right)d\theta$, is independent
of $X_{t}$. However, these assumption are only relevant to establish
other proposition in that paper. The proof of equation \eqref{eq:rhodes-snyder}
still holds as is in our more general setting.},
\begin{equation}
dP\left(x,t\right)=\mathcal{L}^{*}P\left(x,t\right)dt+P\left(x,t\right)\int_{\theta\in\mathbb{R}^{m}}\frac{\lambda_{t}\left(\theta,x\right)-\hat{\lambda}_{t}\left(\theta\right)}{\hat{\lambda}_{t}\left(\theta\right)}\left(N\left(dt\times d\theta\right)-\hat{\lambda}_{t}\left(\theta\right)d\theta\,dt\right),\label{eq:rhodes-snyder}
\end{equation}
where the integral with respect to $N$ is interpreted as in \eqref{eq:point-integral},
$\mathcal{L}$ is the state's posterior infinitesimal generator (Kolmogorov's
backward operator), defined as $\mathcal{L}f\left(x\right)=\lim_{\Delta t\to0^{+}}\left(\mathrm{E}\left[f\left(X_{t+\Delta t}\right)|X_{t}=x\right]-f\left(x\right)\right)/\Delta t$,
$\mathcal{L}^{*}$ is $\mathcal{L}$'s adjoint operator (Kolmogorov's
forward operator), and $\hat{\lambda}_{t}\left(\theta\right)\triangleq\mathrm{E}_{P}^{t}\left[\lambda_{t}\left(\theta,X_{t}\right)\right]=\int P\left(x,t\right)\lambda_{t}\left(\theta,x\right)dx$.

The stochastic PDE \eqref{eq:rhodes-snyder} is usually intractable.
In \cite{RhoSny1977,Susemihl2014} the authors consider linear dynamics
with uniform coding and Gaussian prior. In this case, the posterior
is Gaussian, and \eqref{eq:rhodes-snyder} leads to closed form ODEs
for its moments. When the uniform coding assumption is violated, the
posterior is no longer Gaussian. Still, we can obtain exact equations
for the posterior moments, as follows. 

Let $\mu_{t}=\mathrm{E}_{P}^{t}X_{t},\tilde{X}_{t}=X_{t}-\mu_{t},\Sigma_{t}=\mathrm{E}_{P}^{t}[\tilde{X}_{t}\tilde{X}_{t}^{T}]$.
Using \eqref{eq:rhodes-snyder}, along with known results about the
form of the infinitesimal generator $\mathcal{L}_{t}$ for diffusion
processes (see appendix), the first two posterior moments can be shown
to obey the following equations between spikes (see \cite{SusMeiOpp13}):
\begin{eqnarray}
\frac{d\mu_{t}}{dt} & = & \mathrm{E}_{P}^{t}\left[A\left(X_{t}\right)\right]+\mathrm{E}_{P}^{t}\left[X_{t}\int\left(\hat{\lambda}_{t}\left(\theta\right)-\lambda_{t}\left(\theta,X_{t}\right)\right)d\theta\right]\nonumber \\
\frac{d\Sigma_{t}}{dt} & = & \mathrm{E}_{P}^{t}\left[A\left(X_{t}\right)\tilde{X}_{t}^{T}\right]+\mathrm{E}_{P}^{t}\left[\tilde{X}_{t}A\left(X_{t}\right)^{T}\right]+\mathrm{E}_{P}^{t}\left[D\left(X_{t}\right)D\left(X_{t}\right)^{T}\right]\nonumber \\
 &  & +\mathrm{E}_{P}^{t}\left[\tilde{X}_{t}\tilde{X}_{t}^{T}\int\left(\hat{\lambda}_{t}\left(\theta\right)-\lambda_{t}\left(\theta,X_{t}\right)\right)d\theta\right]\,.\label{eq:moments}
\end{eqnarray}

\subsection{ADF approximation}

While equations \eqref{eq:moments} are exact, they are not practical,
since they require computation of $\mathrm{E}_{P}^{t}\left[\cdot\right]$.
We now proceed to find an approximate closed form for \eqref{eq:moments}.
Here we present the main ideas of the derivation. \textcolor{black}{The
formulation presented here assumes, for simplicity, an open-loop setting
where the system is passively observed. It can be readily extended
to a closed-loop control-based setting, and is presented in this more
general framework in the appendix, including full details.}

To bring \eqref{eq:moments} to a closed form, we use ADF with an
assumed Gaussian density (see \cite{Opper98} for details). Conceptually,
this may be envisioned as integrating \eqref{eq:moments} while replacing
the distribution $P$ by its approximating Gaussian ``at each time
step''. Assuming the moments are known exactly, the Gaussian is obtained
by matching the first two moments of $P$ \cite{Opper98}. Note that
the solution of the resulting equations does not in general match
the first two moments of the exact solution, though it may approximate
it. 

Abusing notation, in the sequel we use $\mu_{t},\Sigma_{t}$ to refer
to the ADF approximation rather than to the exact values. Substituting
the normal distribution $\mathcal{N}(x;\mu_{t},\Sigma_{t})$ for $P(x,t)$
to compute the expectations involving $\lambda_{t}$ in \eqref{eq:moments},
and using \eqref{eq:space-time-density} and the Gaussian form of
$f(\theta)$, results in computable Gaussian integrals. Other terms
may also be computed in closed form if the function $A,D$ can be
expanded as power series. This computation yields approximate equations
for $\mu_{t},\Sigma_{t}$ between spikes. The updates at spike times
can similarly be computed in closed form either from \eqref{eq:rhodes-snyder}
or directly from a Bayesian update of the posterior (see appendix,
or e.g., \cite{SusMeiOpp13}).

For simplicity, we assume that the dynamics are linear, $dX_{t}=AX_{t}+DdW_{t}$,
resulting in the filtering equations 
\begin{align}
d\mu_{t} & =A\mu_{t}dt+g_{t}\Sigma_{t}H^{T}S_{t}\left(H\mu_{t}-c\right)dt+\Sigma_{t^{-}}H^{T}S_{t^{-}}^{\mathrm{tc}}\int_{\theta\in\mathbb{R}^{m}}\left(\theta-H\mu_{t^{-}}\right)N\left(dt\times d\theta\right)\label{eq:adf-mu}\\
d\Sigma_{t} & =\left(A\Sigma_{t}+\Sigma_{t}A+DD^{T}\right)dt\nonumber \\
 & \quad+g_{t}\left[\Sigma_{t}H^{T}S_{t}H\Sigma_{t}-\Sigma_{t}H^{T}S_{t}\left(H\mu_{t}-c\right)\left(H\mu_{t}-c\right)^{T}S_{t}H\Sigma_{t}\right]dt\nonumber \\
 & \quad-\Sigma_{t^{-}}H^{T}S_{t^{-}}^{\mathrm{tc}}H\Sigma_{t^{-}}dN_{t},\label{eq:adf-sigma}
\end{align}
where $S_{t}^{\mathrm{tc}}\triangleq\left(\Sigma_{\mathrm{tc}}+H\Sigma_{t}H^{T}\right)^{-1},\,S_{t}\triangleq\left(\Sigma_{\mathrm{tc}}+\Sigma_{\mathrm{pop}}+H\Sigma_{t}H^{T}\right)^{-1}$,
and
\begin{align*}
g_{t} & \triangleq\int\hat{\lambda}\left(\theta\right)d\theta=\int\mathrm{E}_{P}^{t}\left[\lambda\left(\theta,X_{t}\right)\right]d\theta=\lambda^{0}\sqrt{\det\left(\Sigma_{\mathrm{tc}}S_{t}\right)}\exp\left(-\frac{1}{2}\left\Vert H\mu_{t}-c\right\Vert _{S_{t}}^{2}\right)
\end{align*}
is the posterior expected total firing rate. Expressions including
$t^{-}$ are to be interpreted as left limits $f\left(t^{-}\right)=\lim_{s\to t^{-}}f\left(s\right)$,
which are necessary since the solution is discontinuous at spike times.

The last term in \eqref{eq:adf-mu} is to be interpreted as in \eqref{eq:point-integral}.
It contributes an instantaneous jump in $\mu_{t}$ at the time of
a spike with preferred stimulus $\theta$, moving $H\mu_{t}$ closer
to $\theta$. Similarly, the last term in \eqref{eq:adf-sigma} contributes
an instantaneous jump in $\Sigma_{t}$ at each spike time, which is
the same regardless of spike location. All other terms describe the
evolution of the posterior between spikes: the first few terms in
\eqref{eq:adf-mu}-\eqref{eq:adf-sigma} are the same as in the dynamics
of the prior, as in \cite{SusMeiOpp13,Susemihl2014}, whereas the
terms involving $g_{t}$ correspond to information from the absence
of spikes. Note that the latter scale with $g_{t}$, the expected
total firing rate, i.e., lack of spikes becomes ``more informative''
the higher the expected rate of spikes.

It is illustrative to consider these equations in the scalar case
$m=n=1$, with $H=1$. Letting $\sigma_{t}^{2}=\Sigma_{t},\sigma_{\mathrm{tc}}^{2}=\Sigma_{\mathrm{tc}},\sigma_{\mathrm{pop}}^{2}=\Sigma_{\mathrm{pop}},a=A,d=D$
yields
\begin{align}
d\mu_{t} & =a\mu_{t}dt+g_{t}\frac{\sigma_{t}^{2}}{\sigma_{t}^{2}+\sigma_{\mathrm{tc}}^{2}+\sigma_{\mathrm{pop}}^{2}}\left(\mu_{t}-c\right)dt+\frac{\sigma_{t^{-}}^{2}}{\sigma_{t^{-}}^{2}+\sigma_{\mathrm{tc}}^{2}}\int_{\theta\in\mathbb{R}}\left(\theta-\mu_{t^{-}}\right)N\left(dt\times d\theta\right)\label{eq:adf-mu-1d}\\
d\sigma_{t}^{2} & =\left(2a\sigma_{t}^{2}+d^{2}+g_{t}\frac{\sigma_{t}^{2}}{\sigma_{t}^{2}+\sigma_{\mathrm{tc}}^{2}+\sigma_{\mathrm{pop}}^{2}}\left[1-\frac{\left(\mu_{t}-c\right)^{2}}{\sigma_{t}^{2}+\sigma_{\mathrm{tc}}^{2}+\sigma_{\mathrm{pop}}^{2}}\right]\sigma_{t}^{2}\right)dt-\frac{\sigma_{t^{-}}^{2}}{\sigma_{t^{-}}^{2}+\sigma_{\mathrm{tc}}^{2}}\sigma_{t^{-}}^{2}dN_{t},\label{eq:adf-sigma-1d}
\end{align}
where $g_{t}=\lambda^{0}\sqrt{2\pi\sigma_{\mathrm{tc}}^{2}}\mathcal{N}\left(\mu_{t};c,\sigma_{t}^{2}+\sigma_{\mathrm{tc}}^{2}+\sigma_{\mathrm{pop}}^{2}\right)$.
Figure \ref{1d-filtering} (left) shows how $\mu_{t},\sigma_{t}^{2}$
change between spikes for a static 1-dimensional state ($a=d=0$).
In this case, all terms in the filtering equations drop out except
those involving $g_{t}$. The term involving $g_{t}$ in $d\mu_{t}$
pushes $\mu_{t}$ away from $c$ in the absence of spikes. This effect
weakens as $\left|\mu_{t}-c\right|$ grows due to the factor $g_{t}$,
consistent with the idea that far from $c$, the lack of spikes is
less surprising, hence less informative. The term involving $g_{t}$
in $d\sigma_{t}^{2}$ increases the variance when $\mu_{t}$ is near
$c$, otherwise decreases it.

\begin{figure}
\includegraphics[bb=0bp 10bp 720bp 420bp,clip,height=0.15\textheight]{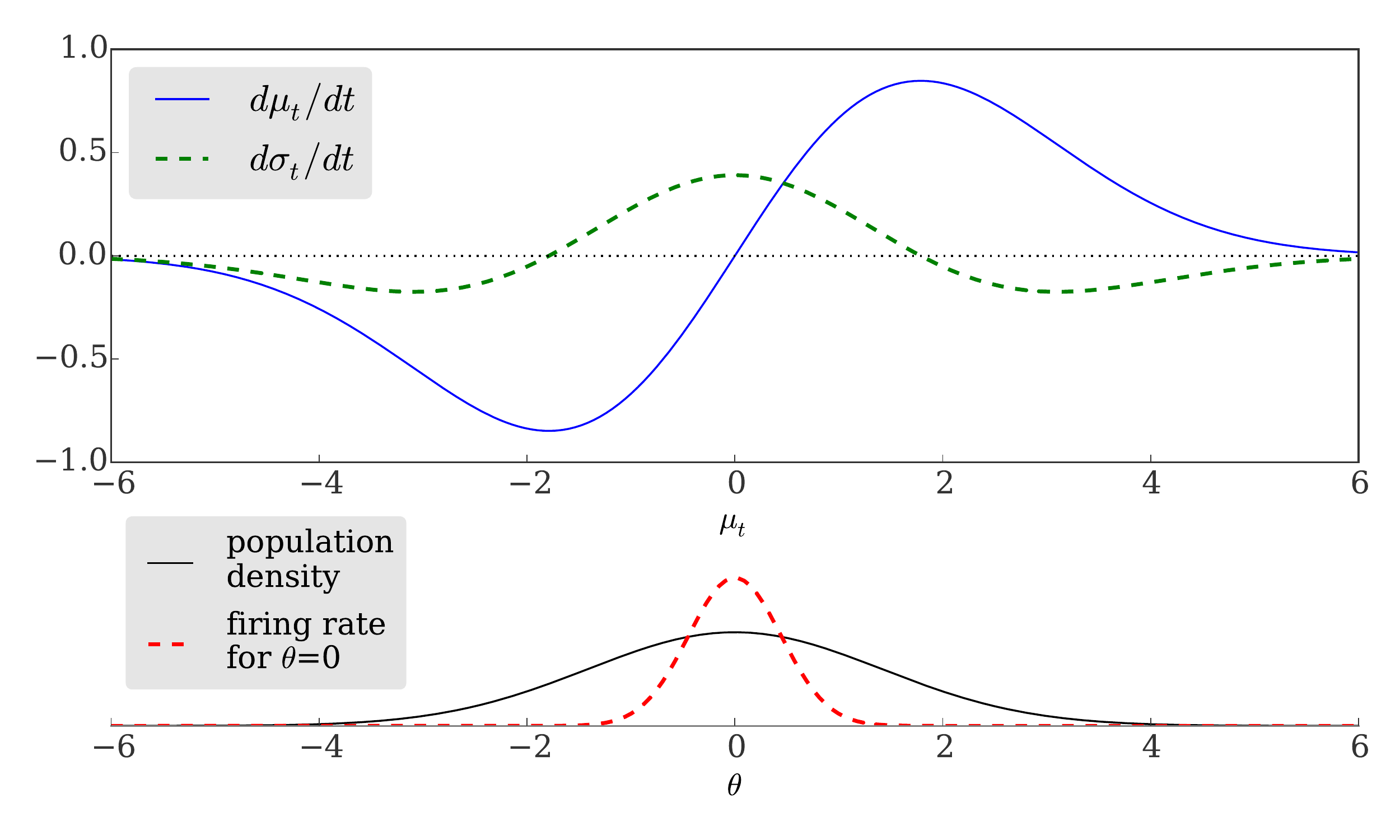}\hfill{}\includegraphics[bb=0bp 10bp 720bp 390bp,clip,height=0.15\textheight]{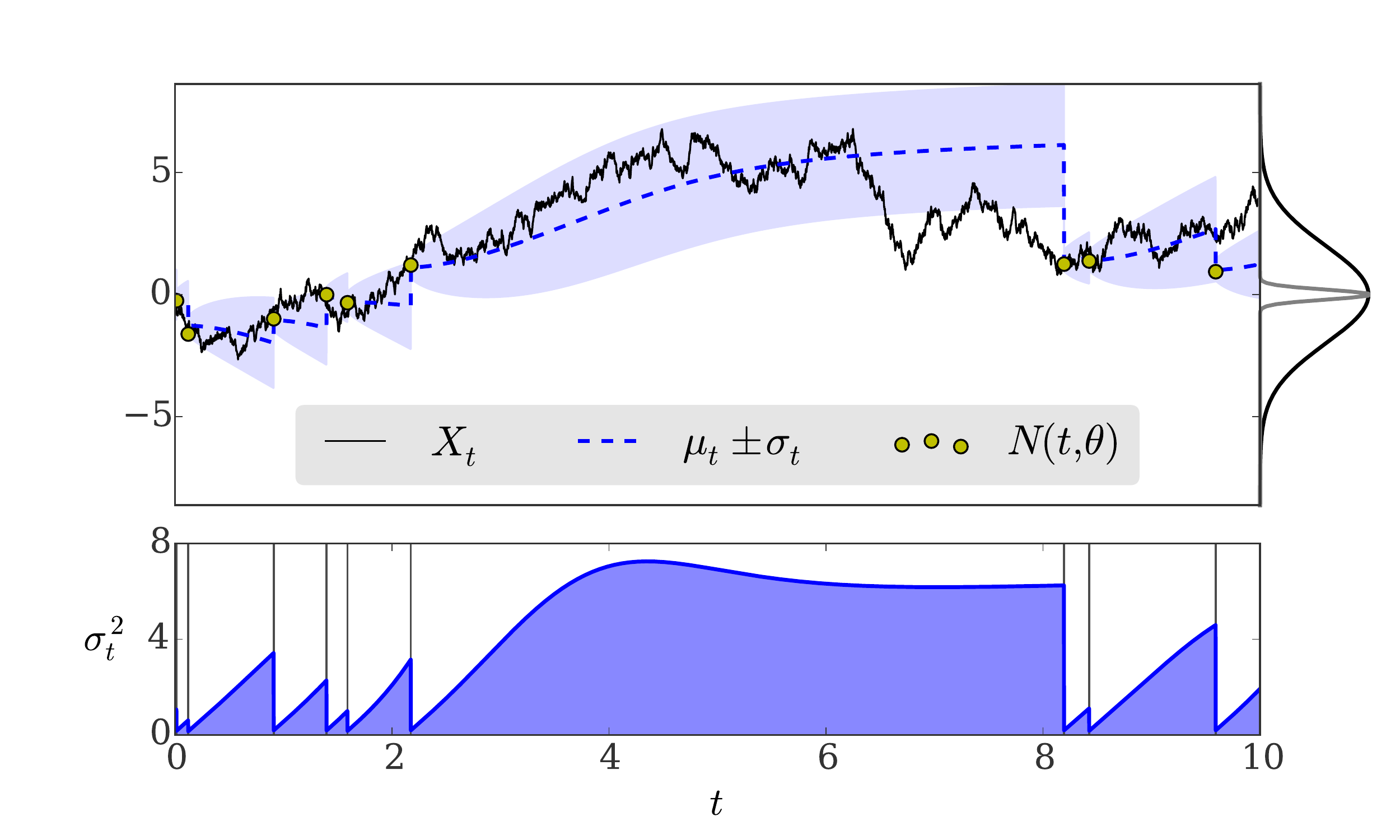}\protect\caption{\textbf{Left} Changes to the posterior moments between spikes as a
function of the current posterior mean estimate, for a static 1-d
state. The parameters are $a=d=0,H=1,\sigma_{\mathrm{pop}}^{2}=1,\sigma_{\mathrm{tc}}^{2}=0.2,c=0,\lambda^{0}=10,\sigma_{t}=1$.
The bottom plot shows the density of preferred stimuli $f\left(\theta\right)$
and tuning curve for a neuron with preferred stimulus $\theta=0$.
\textbf{Right} An example of filtering a linear one-dimensional process.
Each dot correspond to a spike with the vertical location indicating
the preferred stimulus $\theta$. The curves to the right of the graph
show the preferred stimulus density (black), and a tuning curve centered
at $\theta=0$ (gray). The tuning curve and preferred stimulus density
are normalized to the same height for visualization. The bottom graph
shows the posterior variance, with the vertical lines showing spike
times. Parameters are: $a=-0.1,d=2,H=1,\sigma_{\mathrm{pop}}^{2}=2,\sigma_{\mathrm{tc}}^{2}=0.2,c=0,\lambda^{0}=10,\mu_{0}=0,\sigma_{0}^{2}=1$.
Note the decrease of the posterior variance following $t=4$ even
though no spikes are observed.}
\label{1d-filtering}
\end{figure}

\subsection{Information from lack of spikes}

An interesting aspect of the filtering equations \eqref{eq:adf-mu}-\eqref{eq:adf-sigma}
is that the dynamics of the posterior density between spikes differ
from the prior dynamics. This is in contrast to previous models which
assumed uniform coding: the (exact) filtering equations appearing
in \cite{RhoSny1977} and \cite{Susemihl2014} have the same form
as \eqref{eq:adf-mu}-\eqref{eq:adf-sigma} except that they do not
include the correction terms involving $g_{t}$, so that between spikes
the dynamics are identical to the prior dynamics. This reflects the
fact that lack of spikes in a time interval is an indication that
the total firing rate is low; in the uniform coding case, this is
not informative, since the total firing rate is independent of the
state.

Figure \ref{adf-vs-uniform} (left) illustrates the information gained
from lack of spikes. A static scalar state is observed by a process
with rate \eqref{eq:space-time-density}, and filtered twice: once
with the correct value of $\sigma_{\mathrm{pop}}$, and once with
$\sigma_{\mathrm{pop}}\rightarrow\infty$, as in the uniform coding
filter of \cite{Susemihl2014}. Between spikes, the ADF estimate moves
away from the population center $c=0$, whereas the uniform coding
estimate remains fixed. The size of this effect decreases with time,
as the posterior variance estimate (not shown) decreases. The reduction
in filtering errors gained from the additional terms in \eqref{eq:adf-mu}-\eqref{eq:adf-sigma}
is illustrated in Figure \eqref{adf-vs-uniform} (right). Despite
the approximation involved, the full filter significantly outperforms
the uniform coding filter. The difference disappears as $\sigma_{\mathrm{pop}}$
increases and the population becomes uniform.

\begin{figure}
\includegraphics[bb=0bp 0bp 574bp 423bp,height=0.18\textheight]{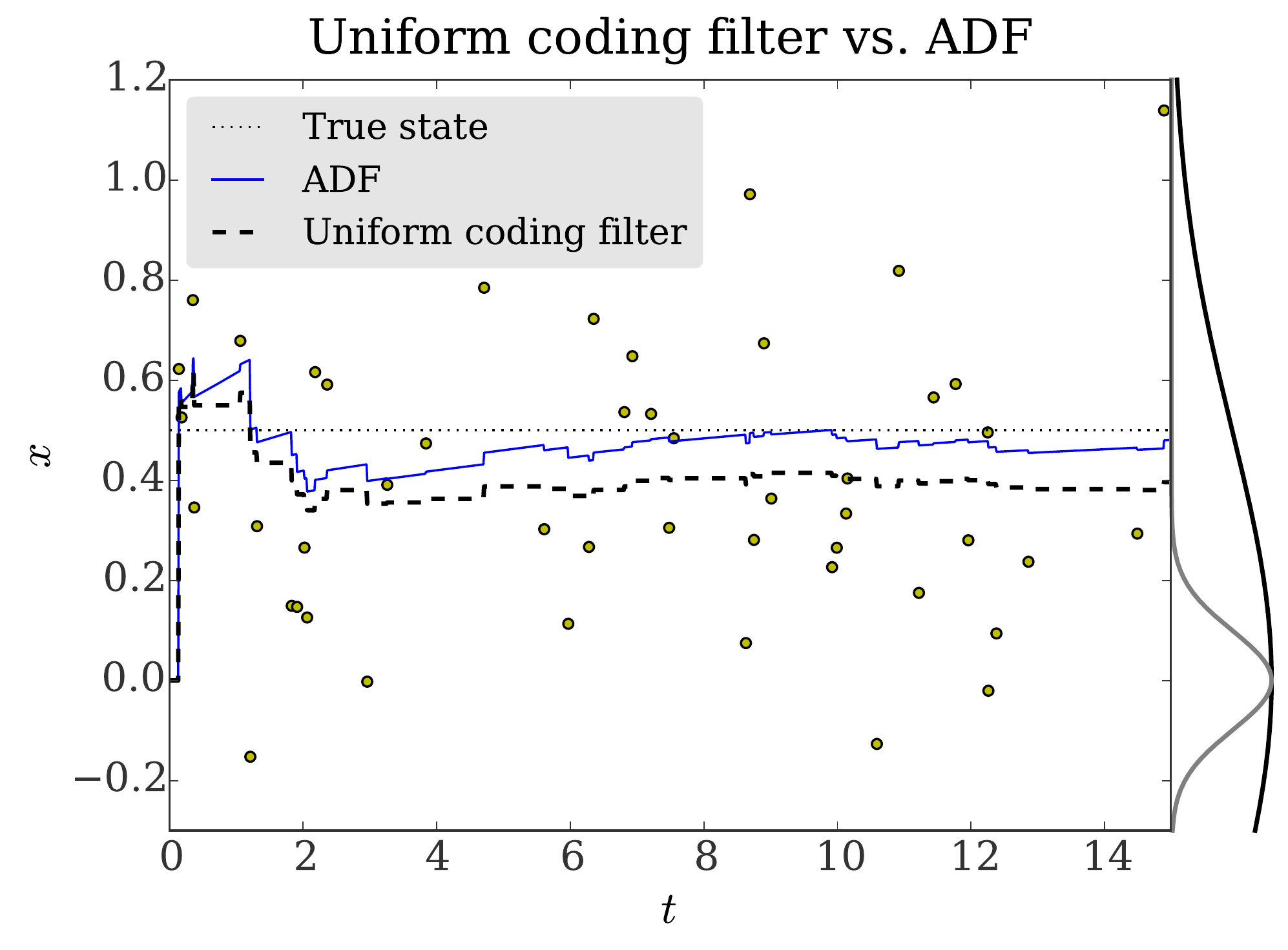}\hfill{}\includegraphics[bb=0bp 0bp 644bp 430bp,height=0.18\textheight]{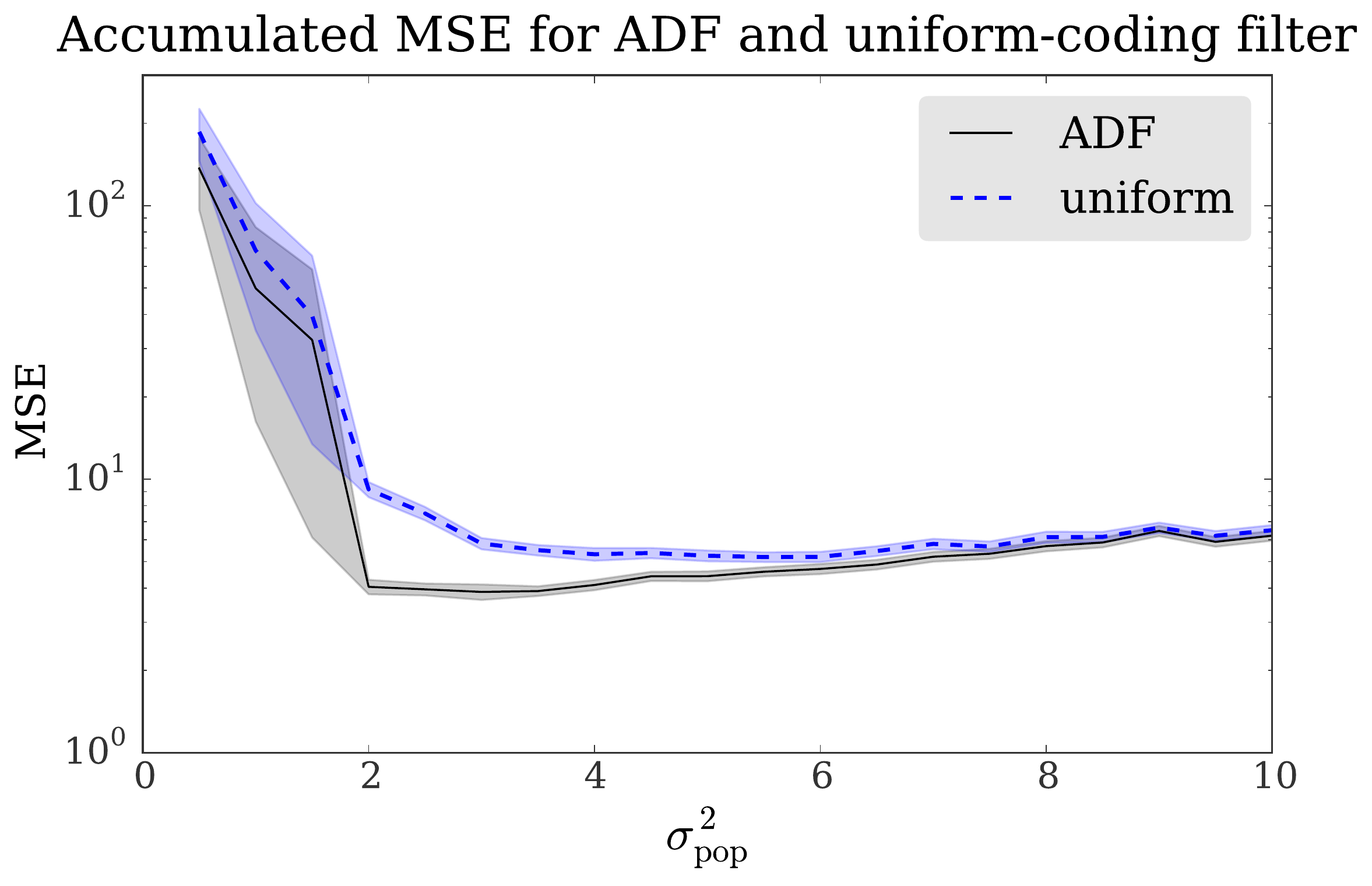}\protect\caption{\textbf{Left} Illustration of information gained between spikes. A
static state $X_{t}=0.5$, shown in a dotted line, is observed and
filtered twice: with the correct value $\sigma_{\mathrm{pop}}^{2}=0.5$
(``ADF'', solid blue line), and with $\sigma_{\mathrm{pop}}^{2}=\infty$
(``Uniform coding filter'', dashed green line). The curves to the
right of the graph show the preferred stimulus density (black), and
a tuning curve centered at $\theta=0$ (gray). Both filters are initialized
with $\mu_{0}=0,\sigma_{0}^{2}=1$. \textbf{Right} Comparison of MSE
for the ADF filter and the uniform coding filter. The vertical axis
shows the integral of the square error integrated over the time interval
$\left[5,10\right]$, averaged over 1000 trials. Shaded areas indicate
estimated errors, computed as the sample standard deviation divided
by the square root of the number of trials. Parameters in both plots
are $a=d=0,c=0,\sigma_{\mathrm{pop}}^{2}=0.5,\sigma_{\mathrm{tc}}^{2}=0.1,H=1,\lambda^{0}=10$.}
\label{adf-vs-uniform}
\end{figure}

\textbf{Special cases~~ }To gain additional insight into the filtering
equations, we consider their behavior in several limits. (i) As $\sigma_{\mathrm{pop}}^{2}\to\infty$,
spikes become rare as the density $f\left(\theta\right)$ approaches
0 for any $\theta$. The total expected rate of spikes $g_{t}$ also
approaches 0, and the terms corresponding to information from lack
of spikes vanish. Other terms in the equations are unaffected. (ii)
In the limit $\sigma_{\mathrm{tc}}^{2}\to\infty$, each neuron fires
as a Poisson process with a constant rate independent of the observed
state. The total expected firing rate $g_{t}$ saturates at its maximum,
$\lambda^{0}$. Therefore the preferred stimuli of spiking neurons
provide no information, nor does the presence or absence of spikes.
Accordingly, all terms other than those related to the prior dynamics
vanish. (iii) The uniform coding case \cite{RhoSny1977,Susemihl2014}
is obtained as a special case in the limit $\sigma_{\mathrm{pop}}^{2}\to\infty$
with $\lambda^{0}/\sigma_{\mathrm{pop}}$ constant. In this limit
the terms involving $g_{t}$ drop out, recovering the (exact) filtering
equations in \cite{RhoSny1977}.

\section{Optimal neural encoding}

We model the problem of optimal neural encoding as choosing the parameters
$c,\Sigma_{\mathrm{pop}},\Sigma_{\mathrm{tc}}$ of the population
and tuning curves, so as to minimize the steady-state MSE. As noted
above, when the estimate is exactly the posterior mean, this is equivalent
to minimizing the steady-state expected posterior variance. The posterior
variance has the advantage of being less noisy than the square error
itself, since by definition it is the mean of the square error (of
the posterior mean) under conditioning by $\mathcal{N}_{t}$. We explore
the question of optimal neural encoding by measuring the steady-state
variance through Monte Carlo simulations of the system dynamics and
the filtering equations \eqref{eq:adf-mu}-\eqref{eq:adf-sigma}.
Since the posterior mean and variance computed by ADF are approximate,
we verified numerically that the variance closely matches the MSE
in the steady state when averaged across many trials (see appendix),
suggesting that asymptotically the error in estimating $\mu_{t}$
and $\Sigma_{t}$ is small.

\subsection{Optimal population center}

We now consider the question of the optimal value for the population
center $c$. Intuitively, if the prior distribution of the process
$X$ is unimodal with mode $x_{0}$, the optimal population center
is at $Hx_{0}$, to produce the most spikes. On the other hand, the
terms involving $g_{t}$ in the filtering equation \eqref{eq:adf-mu}-\eqref{eq:adf-sigma}
suggest that the lack of spikes is also informative. Moreover, as
seen in Figure \ref{1d-filtering} (left), the posterior variance
is reduced between spikes only when the current estimate is far enough
from $c$. These considerations suggest that there is a trade-off
between maximizing the frequency of spikes and maximizing the information
obtained from lack of spikes, yielding an optimal value for $c$ that
differs from $Hx_{0}$.

We simulated a simple one-dimensional process to determine the optimal
value of $c$ which minimizes the approximate posterior variance $\Sigma_{t}$.
Figure \ref{optimal-c} (left) shows the posterior variance for varying
values of the population center $c$ and base firing rate $\lambda^{0}$.
For each firing rate, we note the value of $c$ minimizing the posterior
variance (the optimal population center), as well as the value of
$c_{\mathrm{m}}=\mathrm{argmin}_{c}\left(d\sigma_{t}/dt|_{\mu_{t}=0}\right)$,
which maximizes the reduction in the posterior variance when the current
state estimate $\mu_{t}$ is at the process equilibrium $x_{0}=0$.
Consistent with the discussion above, the optimal value lies between
0 (where spikes are most abundant) and $c_{\mathrm{m}}$ (where lack
of spikes is most informative). As could be expected, the optimal
center is closer to 0 the higher the base firing rate. Similarly,
wide tuning curves, which render the spikes less informative, lead
to an optimal center farther from 0 (Figure \ref{optimal-c}, right).

A shift of the population center relative to the prior mode has been
observed physiologically in encoding of inter-aural time differences
for localization of sound sources \cite{Brand2002}. In \cite{HarMcAlp04},
this phenomenon was explained in a finite population model based on
maximization of Fisher information. This is in contrast to the results
of \cite{GanSim14}, which consider a heterogeneous population where
the tuning curve width scales roughly inversely with neuron density.
In this case, the population density maximizing the Fisher information
is shown to be monotonic with the prior, i.e., more neurons should
be assigned to more probable states. This apparent discrepancy may
be due to the scaling of tuning curve widths in \cite{GanSim14},
which produces roughly constant total firing rate, i.e., uniform coding.
This demonstrates that a non-constant total firing rate, which renders
lack of spikes informative, may be necessary to explain the physiologically
observed shift phenomenon.

\begin{figure}
\includegraphics[bb=0bp 10bp 699bp 267bp,clip,width=0.6\textwidth]{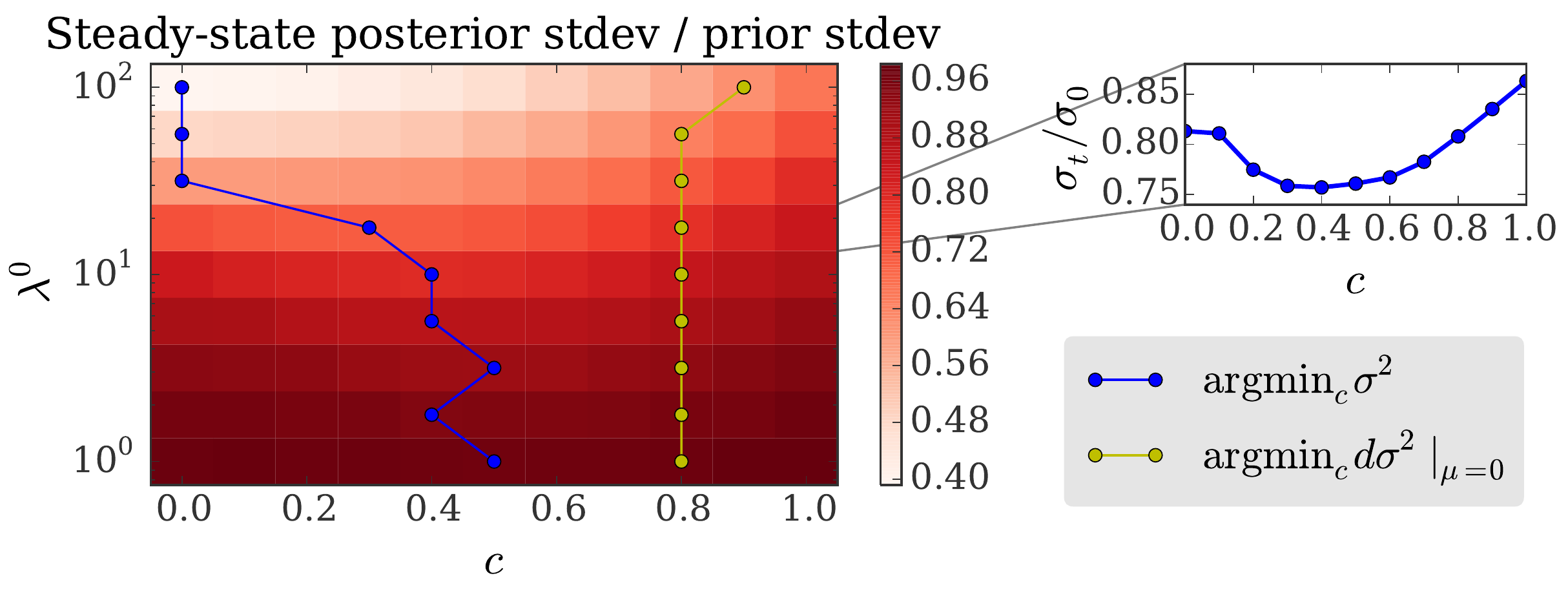}\hfill{}\includegraphics[bb=0bp 10bp 447bp 303bp,clip,width=0.35\textwidth]{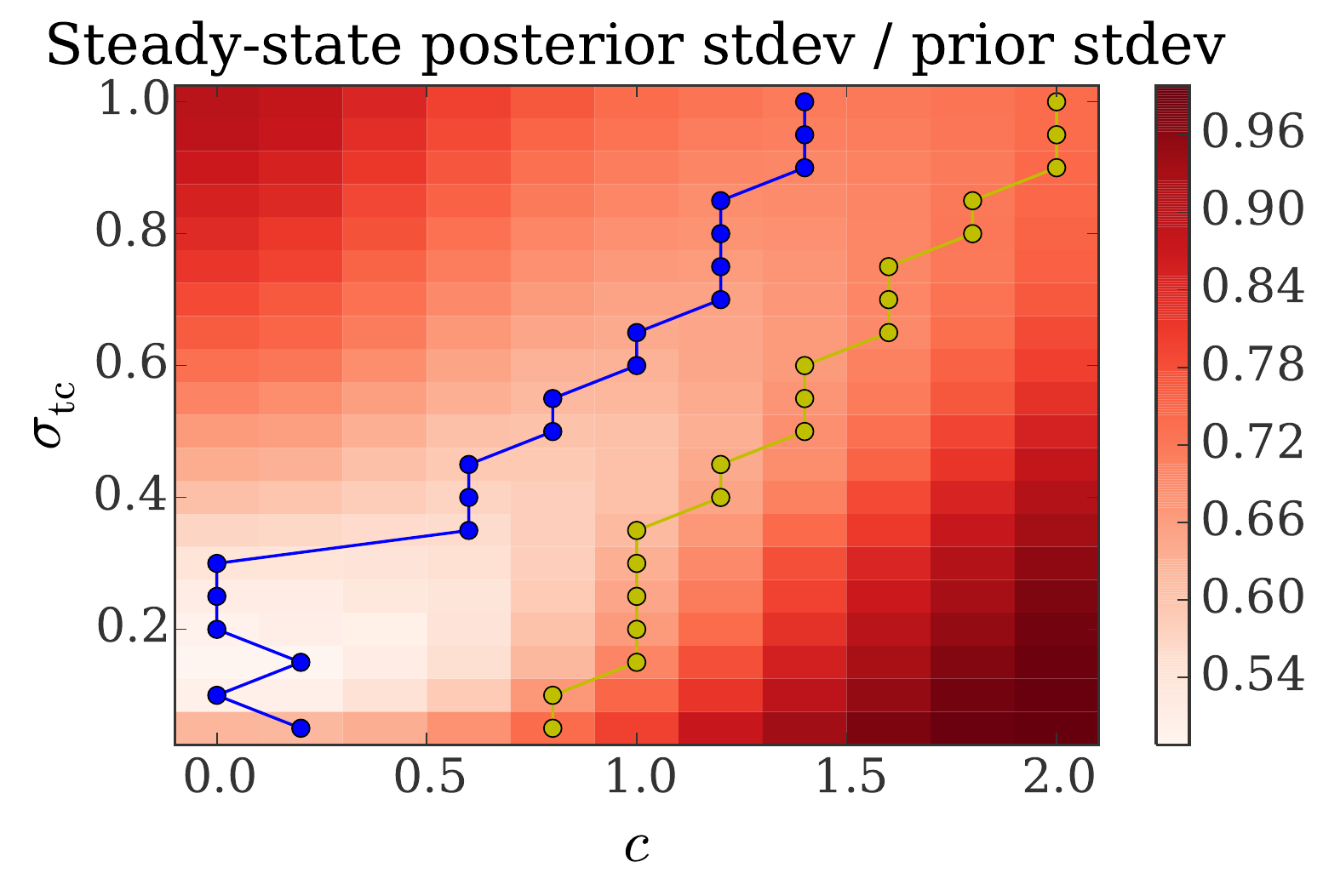}\ ~~\protect\caption{Optimal population center location for filtering a linear one-dimensional
process. Both graphs show the ratio of posterior standard deviation
to the prior steady-state standard deviation of the process, along
with the value of $c$ minimizing the posterior variance (blue line),
and minimizing the reduction of posterior variance when $\mu_{t}=0$
(yellow line). The process is initialized from its steady-state distribution.
The posterior variance is estimated by averaging over the time interval
$\left[5,10\right]$ and across 1000 trials for each data point. Parameters
for both graphs: $a=-1,d=0.5,\sigma_{\mathrm{pop}}^{2}=0.1$. In the
graph on the left, $\sigma_{\mathrm{tc}}^{2}=0.01$; on the right,
$\lambda^{0}=50$.}
\label{optimal-c}
\end{figure}

\subsection{Optimization of population distribution}

Next, we consider the optimization of the population distribution,
namely, the simultaneous optimization of the population center $c$
and the population variance $\Sigma_{\mathrm{pop}}$ in the case of
a static scalar state. Previous work using a finite neuron population
and a Fisher information-based criterion \cite{HarMcAlp04} has shown
that the optimal distribution of preferred stimuli depends on the
prior variance. When it is small relative to the tuning curve width,
optimal encoding is achieved by placing all preferred stimuli at a
fixed distance from the prior mean. On the other hand, when the prior
variance is large relative to the tuning curve width, optimal encoding
is uniform (see figure 2 in \cite{HarMcAlp04}).

Similar results are obtained with our model, as shown in Figure \ref{harper}.
Here, a static scalar state drawn from $\mathcal{N}(0,\sigma_{\mathrm{p}}^{2})$
is filtered by a population with tuning curve width $\sigma_{\mathrm{tc}}=1$
and preferred stimulus density $\mathcal{N}(c,\sigma_{\mathrm{pop}}^{2})$.
In Figure \ref{harper} (left), the prior distribution is narrow relative
to the tuning curve width, leading to an optimal population with a
narrow population distribution far from the origin. In Figure \ref{harper}
(right), the prior is wide relative to the tuning curve width, leading
to an optimal population with variance that roughly matches the prior
variance. When both the tuning curves and the population density are
narrow relative to the prior, so that spikes are rare (low values
of $\sigma_{\mathrm{pop}}$ in Figure \ref{harper} (right)), the
ADF approximation becomes poor, resulting in MSEs larger than the
prior variance.

\begin{figure}
\includegraphics[bb=0bp 10bp 589bp 305bp,clip,width=0.45\textwidth]{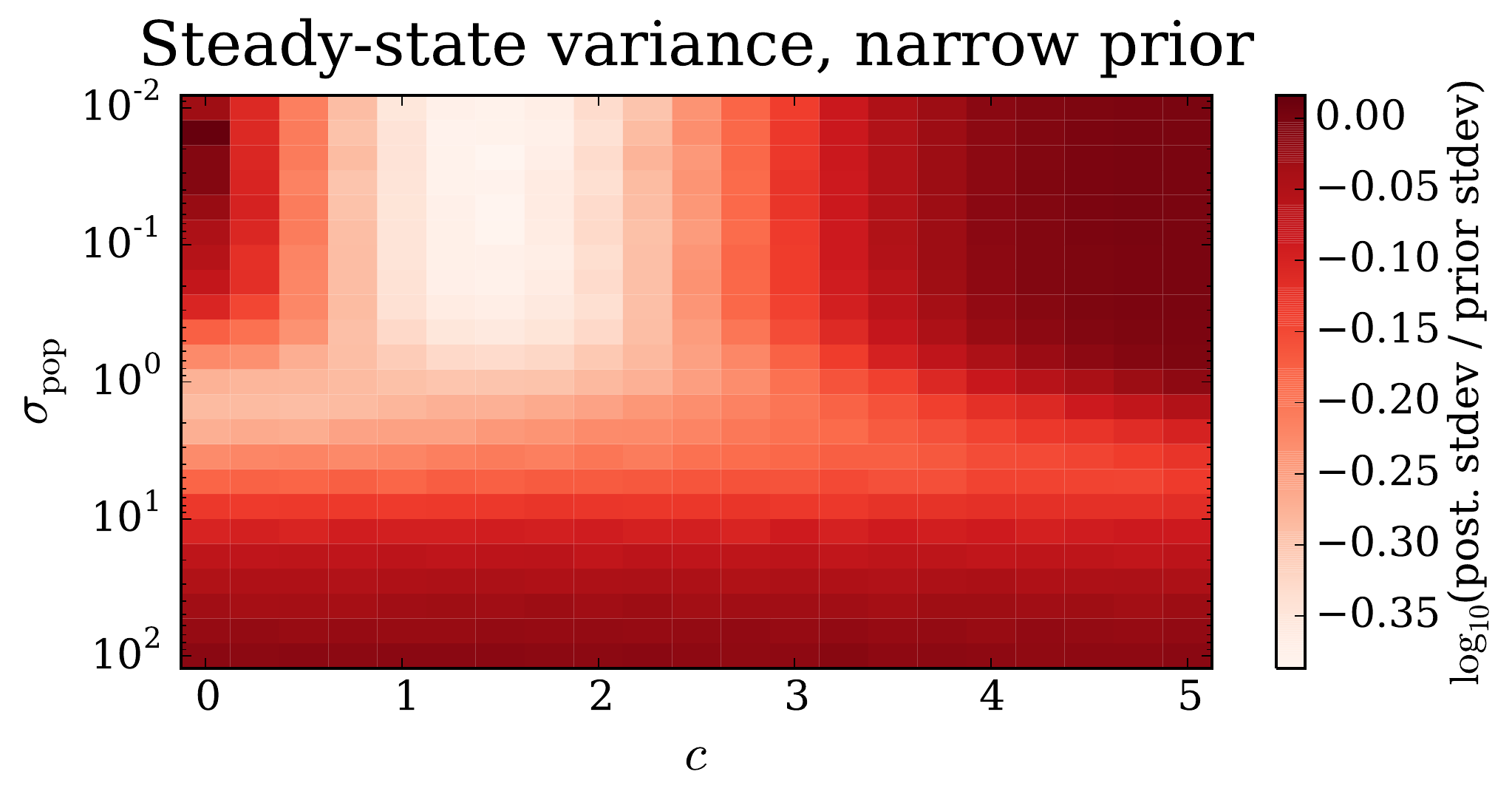}\hfill{}\includegraphics[width=0.4\textwidth]{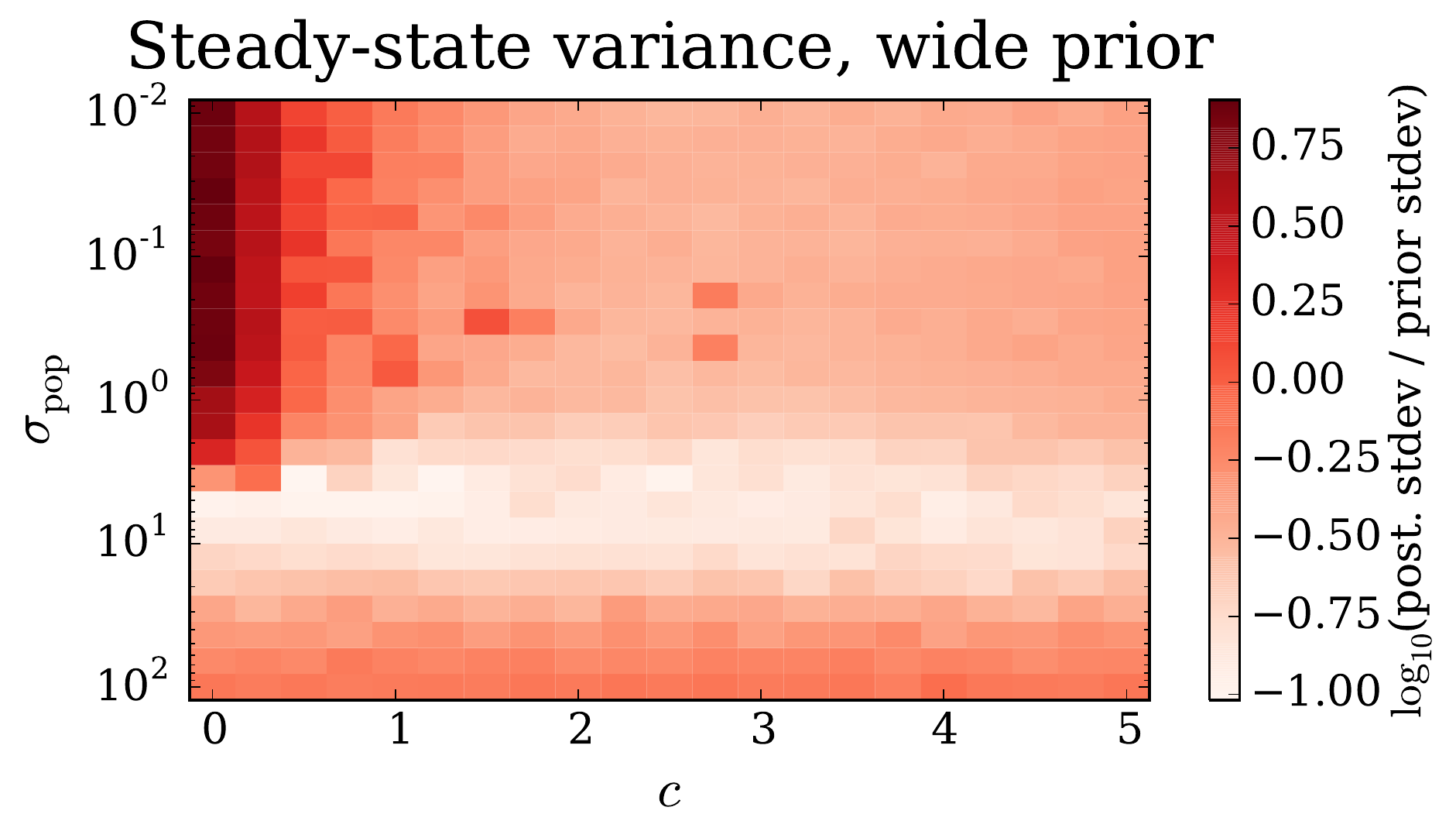}\protect\caption{Optimal population distribution depends on prior variance relative
to tuning curve width. A static scalar state drawn from $\mathcal{N}(0,\sigma_{\mathrm{p}}^{2})$
is filtered with tuning curve $\sigma_{\mathrm{tc}}=1$ and preferred
stimulus density $\mathcal{N}(c,\sigma_{\mathrm{pop}}^{2})$. Both
graphs show the posterior standard deviation relative to the prior
standard deviation $\sigma_{\mathrm{p}}$. In the left graph, the
prior distribution is narrow, $\sigma_{\mathrm{p}}^{2}=0.1$, whereas
on the right, it is wide, $\sigma_{\mathrm{p}}^{2}=10$. In both cases
the filter is initialized with the correct prior, and the square error
is averaged over the time interval $\left[5,10\right]$ and across
100 trials for each data point.}
\label{harper}
\end{figure}

\section{Conclusions}

We have introduced an analytically tractable Bayesian approximation
to point process filtering, allowing us to gain insight into the generally
intractable infinite-dimensional filtering problem. The approach enables
the derivation of near-optimal encoding schemes going beyond previously
studied uniform coding assumptions. The framework is presented in
continuous time, circumventing temporal discretization errors and
numerical imprecisions in sampling-based methods, applies to fully
dynamic setups, and directly estimates the MSE rather than lower bounds
to it. It successfully explains observed experimental results, and
opens the door to many future predictions. Future work will include
a development of previously successful mean field approaches \cite{SusMeiOpp13}
within our more general framework, leading to further analytic insight.
Moreover, the proposed strategy may lead to practically useful decoding
of spike trains. 

\appendix

\section{Appendix}

\subsection{Derivation of the ADF filtering equations for linear dynamics}

\subsubsection{Setting and notation}

In the main text, we have presented our model in an open-loop setting,
where the process $X$ is passively observed. Here we consider a more
general setting, incorporating a control process $U_{t}$, so the
dynamics are
\begin{equation}
dX_{t}=\left(A\left(X_{t}\right)+B\left(U_{t}\right)\right)dt+D\left(X_{t}\right)dW_{t},\label{eq:closed_loop_dyn}
\end{equation}
where, in general, $U_{t}$ is a function of $\mathcal{N}_{t}$. 

For the purposes of the derivation, it is convenient to work with
precision matrices rather than variance matrices. We write $F=\Sigma_{\mathrm{pop}}^{-1},R=\Sigma_{\mathrm{tc}}^{-1}$
and $Q_{t}=\Sigma_{t}^{-1}$. Thus the density of the process $N$
at $\left(t,\theta\right)$ given $X_{\left[0,t\right]}$ is
\begin{align}
\lambda_{t}\left(\theta,X_{t}\right) & =\lambda^{0}\sqrt{\frac{\left|F\right|}{\left(2\pi\right)^{m}}}\exp\left(-\frac{1}{2}\left\Vert \theta-c\right\Vert _{F}^{2}-\frac{1}{2}\left\Vert HX_{t}-\theta\right\Vert _{R}^{2}\right),\label{eq:space-time-density-precision}\\
 & =\lambda^{0}\sqrt{\frac{\left|F\right|}{\left(2\pi\right)^{m}}}\exp\left(-\frac{1}{2}\left\Vert HX_{t}-c\right\Vert _{M}^{2}-\frac{1}{2}\left\Vert \theta-\left(F+R\right)^{-1}\left(Fc+RHX_{t}\right)\right\Vert _{F+R}^{2}\right),
\end{align}
where $M\triangleq\left(F^{-1}+R^{-1}\right)^{-1}$.

We denote by $P\left(\cdot,t\right)$ the posterior density of $X_{t}$
given $\mathcal{N}_{t}$, and by $\mathrm{E}_{P}^{t}\left[\cdot\right]$
the posterior expectation based on observations up to time $t$. We
will simply write $\mathrm{E}_{P}\left[\cdot\right]$ when the time
$t$ is obvious from context.

\subsubsection{Filtering equations between spikes}

\paragraph{Exact filtering equations for the first two moments}

As seen in \cite{RhoSny1977}, the PDE for the posterior density,
\begin{equation}
d_{t}P\left(x,t\right)=\mathcal{L}_{t}^{*}P\left(x,t\right)dt+P\left(x,t\right)\int_{\mathbb{R}^{n}}\frac{\lambda_{t}\left(\theta,x\right)-\hat{\lambda}_{t}\left(\theta\right)}{\hat{\lambda}_{t}\left(\theta\right)}\left(N\left(dt\times d\theta\right)-\hat{\lambda}_{t}\left(\theta\right)d\theta\,dt\right),\label{eq:rhodes-snyder-time-variant}
\end{equation}
still holds in the closed-loop case. Here $\mathcal{L}_{t}$ (appearing
in place of $\mathcal{L}$ in \eqref{eq:rhodes-snyder}) is the posterior
infinitesimal generator, defined with an additional conditioning on
$\mathcal{N}_{t}$,
\[
\mathcal{L}_{t}f\left(x\right)=\lim_{\Delta t\to0^{+}}\frac{\mathrm{E}\left[f\left(X_{t+\Delta t}\right)|X_{t}=x,\mathcal{N}_{t}\right]-f\left(x\right)}{\Delta t},
\]
and $\mathcal{L}_{t}^{*}$ is its adjoint. Note that in this closed-loop
setting, the infinitesimal generator is itself a random operator,
due to its dependence on past observations through the control law,
and that $N_{t}$ is no longer a doubly-stochastic Poisson process.

\textit{Between spikes}, \eqref{eq:rhodes-snyder-time-variant} simplifies
to 
\[
\frac{\partial}{\partial t}P\left(x,t\right)=\mathcal{L}_{t}^{*}P\left(x,t\right)-P\left(x,t\right)\int_{\mathbb{R}^{n}}\left(\lambda_{t}\left(\theta,x\right)-\hat{\lambda}_{t}\left(\theta\right)\right)d\theta,
\]
so for a sufficiently well-behaved function $f$,
\begin{eqnarray*}
\frac{\partial\mathrm{E}_{P}\left[f\left(X_{t}\right)\right]}{\partial t} & = & \int f\left(x\right)\left(\mathcal{L}_{t}^{*}P\left(x,t\right)+P\left(x,t\right)\int\left(\hat{\lambda}_{t}\left(\theta\right)-\lambda_{t}\left(\theta,x\right)\right)d\theta\right)dx\\
 & = & \int P\left(x,t\right)\left(\mathcal{L}_{t}f\left(x\right)+f\left(x\right)\int\left(\hat{\lambda}_{t}\left(\theta\right)-\lambda_{t}\left(\theta,x\right)\right)d\theta\right)dx\\
 & = & \mathrm{E}_{P}\left[\mathcal{L}_{t}f\left(X_{t}\right)+f\left(X_{t}\right)\int\left(\hat{\lambda}_{t}\left(\theta\right)-\lambda_{t}\left(\theta,X_{t}\right)\right)d\theta\right].
\end{eqnarray*}

Assuming the state evolves as in \eqref{eq:closed_loop_dyn}, the
(closed loop) infinitesimal generator is
\[
\mathcal{L}_{t}f\left(x\right)=\left(A\left(x\right)+B\left(U_{t}\right)\right)^{T}\nabla f\left(x\right)+\frac{1}{2}\mathrm{Tr}\left[\nabla^{2}f\left(x\right)D\left(x\right)D\left(x\right)^{T}\right],
\]
so, letting $\mu_{t}=\mathrm{E}_{P}X_{t},\tilde{X}_{t}=X_{t}-\mu_{t},\Sigma_{t}=\mathrm{E}_{P}\left[\tilde{X}_{t}\tilde{X}_{t}^{T}\right]$,
\begin{eqnarray}
\frac{d\mu_{t}}{dt} & = & \mathrm{E}_{P}\left[A\left(X_{t}\right)\right]+B\left(U_{t}\right)+\mathrm{E}_{P}\left[X_{t}\int\left(\hat{\lambda}_{t}\left(\theta\right)-\lambda_{t}\left(\theta,X_{t}\right)\right)d\theta\right],\nonumber \\
\frac{d\Sigma_{t}}{dt} & = & \mathrm{E}_{P}\left[A\left(X_{t}\right)\tilde{X}_{t}^{T}\right]+\mathrm{E}_{P}\left[\tilde{X}_{t}A\left(X_{t}\right)^{T}\right]+\mathrm{E}_{P}\left[D\left(X_{t}\right)D\left(X_{t}\right)^{T}\right]\nonumber \\
 &  & +\mathrm{E}_{P}\left[\tilde{X}_{t}\tilde{X}_{t}^{T}\int\left(\hat{\lambda}_{t}\left(\theta\right)-\lambda_{t}\left(\theta,X_{t}\right)\right)d\theta\right],\label{eq:moments-closed-loop}
\end{eqnarray}
which is the same as \eqref{eq:moments}, with an additional term
$B\left(U_{t}\right)$.

\paragraph{ADF approximation}

The computations that follow frequently require multiplying Gaussian
functions, sometimes with a possibly degenerate precision matrix.
To this end, we use the following slightly generalized form of a well-known
result about the sum of quadratic forms.
\begin{claim*}
Let $x,a,b\in\mathbb{R}^{n}$ and $A,B\in\mathbb{R}^{n\times n}$
be symmetric matrices such that $A+B$ is non-singular. Then
\[
\left\Vert x-a\right\Vert _{A}^{2}+\left\Vert x-b\right\Vert _{B}^{2}=\left\Vert a-b\right\Vert _{A\left(A+B\right)^{-1}B}^{2}+\left\Vert x-\left(A+B\right)^{-1}\left(Aa+Bb\right)\right\Vert _{A+B}^{2}.
\]
\end{claim*}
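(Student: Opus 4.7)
The plan is to view the left-hand side $Q(x) := \|x-a\|_A^2 + \|x-b\|_B^2$ as a quadratic function of $x$ with Hessian $2C$, where $C := A+B$. Since $C$ is non-singular, $Q$ has a unique critical point $m := C^{-1}(Aa+Bb)$ (characterized by $\nabla Q(m) = 0$, i.e., $A(m-a)+B(m-b)=0$), and Taylor expansion around $m$ yields the exact identity $Q(x) = Q(m) + \|x-m\|_C^2$. This already accounts for the $\|x-m\|_{A+B}^2$ term in the claimed identity, so the problem reduces to verifying that $Q(m) = \|a-b\|_{AC^{-1}B}^2$.

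To evaluate $Q(m)$, I would substitute $m$ into the expanded form $Q(x) = x^T C x - 2x^T(Aa+Bb) + a^TAa + b^TBb$ and use $Cm = Aa+Bb$ to obtain $Q(m) = a^TAa + b^TBb - (Aa+Bb)^T C^{-1}(Aa+Bb)$. Expanding the last term and using the scalar-transpose identity $a^TAC^{-1}Bb = b^TBC^{-1}Aa$ yields $a^TAC^{-1}Aa + 2a^TAC^{-1}Bb + b^TBC^{-1}Bb$. The key algebraic simplification comes from $C = A+B$: writing $A = C - B$ gives $AC^{-1}A = A - AC^{-1}B$ and symmetrically $BC^{-1}B = B - BC^{-1}A$. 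Substituting collapses $Q(m)$ to $a^TKa - 2a^TKb + b^TKb = (a-b)^T K(a-b)$ with $K := AC^{-1}B$, as desired.

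One clarifying observation is that $K = AC^{-1}B$ is automatically symmetric: rewriting $A = C - B$ gives $AC^{-1}B = B - BC^{-1}B$, which is manifestly symmetric and coincides with $BC^{-1}A$ under the same manipulation. This justifies writing $\|a-b\|_K^2 = (a-b)^T K(a-b)$ unambiguously. There is no real obstacle in the argument; the only care required is to avoid invoking $A^{-1}$ or $B^{-1}$ individually, since the hypothesis only guarantees invertibility of the sum $C$. Carrying out every manipulation in terms of $C^{-1}$ alone keeps the computation elementary and valid in the full generality of the claim, including cases (important for the ADF derivation) where $\Sigma_{\mathrm{pop}}$ or related matrices may be singular.
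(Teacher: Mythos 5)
Your argument is correct, and it is worth noting where it genuinely improves on the paper's own proof. Both proofs share the same skeleton: isolate the term $\left\Vert x-(A+B)^{-1}(Aa+Bb)\right\Vert _{A+B}^{2}$ (the paper by explicit completion of squares, you by Taylor-expanding the quadratic about its critical point --- the same computation in different clothing) and then show that the leftover constant equals $\left\Vert a-b\right\Vert _{A(A+B)^{-1}B}^{2}$. The difference is in that second step. The paper simplifies the residual by passing through the identity $A(A+B)^{-1}B=(A^{-1}+B^{-1})^{-1}$, which forces it to first assume $A$ and $B$ individually invertible and then recover the stated generality (only $A+B$ non-singular) by a continuity argument. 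You instead substitute $A=C-B$ with $C=A+B$ to get $AC^{-1}A=A-AC^{-1}B$ and $BC^{-1}B=B-BC^{-1}A$, which collapses the residual to $(a-b)^{T}K(a-b)$ with $K=AC^{-1}B$ while only ever inverting $C$. This makes the proof direct in the full generality of the hypothesis and dispenses with the limiting argument entirely --- a real gain, since the degenerate case (e.g.\ singular $\Sigma_{\mathrm{pop}}$ or a rank-deficient $H^{T}MH$) is precisely the one the ADF derivation needs. Your observation that $K=AC^{-1}B=BC^{-1}A=B-BC^{-1}B$ is automatically symmetric is also a point the paper leaves implicit, and it is needed to make sense of the notation $\left\Vert \cdot\right\Vert _{K}^{2}$.
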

\begin{proof}
This is proved by a straightforward completion of squares. If $A,B$
are invertible, 
\begin{eqnarray*}
\left\Vert x-a\right\Vert _{A}^{2}+\left\Vert x-b\right\Vert _{B}^{2} & = & \left\Vert x\right\Vert _{A}^{2}-x^{T}Aa-a^{T}Ax+\left\Vert a\right\Vert _{A}^{2}+\left\Vert x\right\Vert _{B}^{2}-x^{T}Bb-b^{T}Bx+\left\Vert b\right\Vert _{B}^{2}\\
 & = & \left\Vert x\right\Vert _{A+B}^{2}-x^{T}\left(Aa+Bb\right)-\left(Aa+Bb\right)^{T}x+\left\Vert a\right\Vert _{A}^{2}+\left\Vert b\right\Vert _{B}^{2}\\
 & = & \left\Vert x-\left(A+B\right)^{-1}\left(Aa+Bb\right)\right\Vert _{A+B}^{2}-\left\Vert \left(A+B\right)^{-1}\left(Aa+Bb\right)\right\Vert _{A+B}^{2}+\left\Vert a\right\Vert _{A}^{2}+\left\Vert b\right\Vert _{B}^{2}\\
 & = & \left\Vert x-\left(A+B\right)^{-1}\left(Aa+Bb\right)\right\Vert _{A+B}^{2}+\underbrace{\left\Vert a\right\Vert _{A}^{2}+\left\Vert b\right\Vert _{B}^{2}-\left\Vert Aa+Bb\right\Vert _{\left(A+B\right)^{-1}}^{2}}_{*}
\end{eqnarray*}
\begin{eqnarray*}
* & = & \left\Vert a\right\Vert _{A}^{2}+\left\Vert b\right\Vert _{B}^{2}-\left\Vert Aa\right\Vert _{\left(A+B\right)^{-1}}^{2}-a^{T}A\left(A+B\right)^{-1}Bb-b^{T}B\left(A+B\right)^{-1}Aa-\left\Vert Bb\right\Vert _{\left(A+B\right)^{-1}}^{2}\\
 & = & a^{T}A\left(a-\left(A+B\right)^{-1}Aa-\left(A+B\right)^{-1}Bb\right)+b^{T}B\left(b-\left(A+B\right)^{-1}Bb-\left(A+B\right)^{-1}Aa\right)\\
 & = & a^{T}A\left(A+B\right)^{-1}B\left(a-b\right)+b^{T}B\left(A+B\right)^{-1}A\left(b-a\right)\\
 & = & a^{T}\left(B^{-1}+A^{-1}\right)^{-1}\left(a-b\right)+b^{T}\left(A^{-1}+B^{-1}\right)^{-1}\left(b-a\right)\\
 & = & \left\Vert a-b\right\Vert _{\left(A^{-1}+B^{-1}\right)^{-1}}^{2}\\
 & = & \left\Vert a-b\right\Vert _{A\left(A+B\right)^{-1}B}^{2}
\end{eqnarray*}
By continuity, the claim also holds when $A,B$ are not both invertible,
provided $\left(A+B\right)$ is invertible.
\end{proof}
Computing the expectations in \eqref{eq:moments-closed-loop} involves
computation of integrals containing $P\left(x,t\right)\lambda_{t}\left(\theta,x\right)$.
Taking the ADF approximation $P\left(x,t\right)\approx\mathcal{N}\left(x;\mu_{t},\Sigma_{t}\right)$,
and using the claim above, we have 
\begin{eqnarray*}
P\left(x,t\right)\lambda_{t}\left(\theta,x\right) & \approx & \lambda^{0}\sqrt{\frac{\left|F\right|}{\left(2\pi\right)^{m}}}\mathcal{N}\left(x;\mu_{t},\Sigma_{t}\right)\exp\left(-\frac{1}{2}\left(\left\Vert Hx-c\right\Vert _{M}^{2}+\left\Vert \theta-\left(F+R\right)^{-1}\left(Fc+RHx\right)\right\Vert _{F+R}^{2}\right)\right)\\
 & = & \lambda^{0}\sqrt{\frac{\left|F\right|}{\left(2\pi\right)^{m+n}\left|\Sigma_{t}\right|}}\exp\left(-\frac{1}{2}\left\Vert x-\mu_{t}\right\Vert _{Q_{t}}^{2}\right)\\
 &  & \times\exp\left(-\frac{1}{2}\left(\left\Vert x-\bar{c}\right\Vert _{H^{T}MH}^{2}+\left\Vert \theta-\left(F+R\right)^{-1}\left(Fc+RHx\right)\right\Vert _{F+R}^{2}\right)\right)\\
 & = & \lambda^{0}\sqrt{\frac{\left|F\right|}{\left(2\pi\right)^{m+n}\left|\Sigma_{t}\right|}}\exp\left(-\frac{1}{2}\left(\left\Vert \mu_{t}-\bar{c}\right\Vert _{Q_{t}^{M}}^{2}+\left\Vert x-\mu_{t}^{M}\right\Vert _{Q_{t}+H^{T}MH}^{2}\right)\right)\\
 &  & \times\exp\left(-\frac{1}{2}\left\Vert \theta-\left(F+R\right)^{-1}\left(Fc+RHx\right)\right\Vert _{F+R}^{2}\right)
\end{eqnarray*}
where $H_{r}^{-1}$ is any right inverse of $H$, and 
\begin{eqnarray*}
\bar{c} & \triangleq & H_{r}^{-1}c\\
M & \triangleq & F\left(F+R\right)^{-1}R=\left(F^{-1}+R^{-1}\right)^{-1}\\
\mu_{t}^{M} & \triangleq & \left(Q_{t}+H^{T}MH\right)^{-1}\left(Q_{t}\mu_{t}+H^{T}MH\bar{c}\right)\\
Q_{t}^{M} & \triangleq & Q_{t}\left(Q_{t}+H^{T}MH\right)^{-1}H^{T}MH=\left(I+H^{T}MH\Sigma_{t}\right)^{-1}H^{T}MH.
\end{eqnarray*}
An alternate form for $Q_{t}^{M}$ may be derived from the Woodbury
identity as follows, 
\begin{eqnarray}
Q_{t}^{M} & = & \left(I+H^{T}MH\Sigma_{t}\right)^{-1}H^{T}MH\nonumber \\
 & = & \left(I-H^{T}\left(M^{-1}+H\Sigma_{t}H^{T}\right)^{-1}H\Sigma_{t}\right)H^{T}MH\nonumber \\
 & = & H^{T}\left(I-\left(M^{-1}+H\Sigma_{t}H^{T}\right)^{-1}H\Sigma_{t}H^{T}\right)MH\nonumber \\
 & = & H^{T}\left(\left(M^{-1}+H\Sigma_{t}H^{T}\right)^{-1}M^{-1}\right)MH\nonumber \\
 & = & H^{T}S_{t}^{M}H,\label{eq:QM-woodbury}
\end{eqnarray}
where 
\[
S_{t}^{M}\triangleq\left(M^{-1}+H\Sigma_{t}H^{T}\right)^{-1},
\]
so we can write
\begin{eqnarray*}
P\left(x,t\right)\lambda_{t}\left(\theta,x\right) & \approx & \lambda^{0}\sqrt{\frac{\left|F\right|}{\left(2\pi\right)^{m+n}\left|\Sigma_{t}\right|}}\exp\left(-\frac{1}{2}\left(\left\Vert H\mu_{t}-c\right\Vert _{S_{t}^{M}}^{2}+\left\Vert x-\mu_{t}^{M}\right\Vert _{Q_{t}+H^{T}MH}^{2}\right)\right)\\
 &  & \times\exp\left(-\frac{1}{2}\left\Vert \theta-\left(F+R\right)^{-1}\left(Fc+RHx\right)\right\Vert _{F+R}^{2}\right).
\end{eqnarray*}

Now we define
\[
g_{t}\triangleq\int\hat{\lambda}_{t}\left(\theta\right)d\theta=\int\mathrm{E}_{P}\left[\lambda_{t}\left(\theta,X_{t}\right)\right]d\theta,
\]
and compute its value as follows:
\begin{eqnarray*}
g_{t} & = & \int\hat{\lambda}_{t}\left(\theta\right)d\theta\\
 & = & \int\int P\left(x,t\right)\lambda_{t}\left(\theta,x\right)dxd\theta\\
 & \approx & \lambda^{0}\sqrt{\frac{\det F}{\left(2\pi\right)^{m+n}\det\Sigma_{t}}}\exp\left(-\frac{1}{2}\left\Vert H\mu_{t}-c\right\Vert _{S_{t}^{M}}^{2}\right)\int dx\exp\left(-\frac{1}{2}\left\Vert x-\mu_{t}^{M}\right\Vert _{Q_{t}+H^{T}MH}^{2}\right)\\
 &  & \times\int d\theta\exp\left(-\frac{1}{2}\left\Vert \theta-\left(F+R\right)^{-1}\left(Fc+RHx\right)\right\Vert _{F+R}^{2}\right)\\
 & = & \lambda^{0}\sqrt{\frac{\det F}{\left(2\pi\right)^{n}\det\Sigma_{t}\det\left(F+R\right)}}\exp\left(-\frac{1}{2}\left\Vert H\mu_{t}-c\right\Vert _{S_{t}^{M}}^{2}\right)\\
 &  & \int dx\exp\left(-\frac{1}{2}\left\Vert x-\mu_{t}^{M}\right\Vert _{Q_{t}+H^{T}MH}^{2}\right)\\
 & = & \lambda^{0}\sqrt{\frac{\det F}{\det\Sigma_{t}\det\left(Q_{t}+H^{T}MH\right)\det\left(F+R\right)}}\exp\left(-\frac{1}{2}\left\Vert H\mu_{t}-c\right\Vert _{S_{t}^{M}}^{2}\right)
\end{eqnarray*}
To simplify the expression under the square root, we note that
\[
\frac{\det F}{\det\left(F+R\right)}=\det\left(F\left(F+R\right)^{-1}\right)=\det\left(\left(R^{-1}+F^{-1}\right)^{-1}R^{-1}\right)=\frac{\det M}{\det R},
\]
and, using the matrix determinant lemma and \eqref{eq:QM-woodbury}
\[
\det\left(Q_{t}+H^{T}MH\right)=\det Q_{t}\det M\det\left(M^{-1}+H\Sigma_{t}H^{T}\right)=\frac{\det M}{\det\Sigma_{t}\det S_{t}^{M}}
\]
so 
\[
g_{t}=\lambda^{0}\sqrt{\frac{\det S_{t}^{M}}{\det R}}\exp\left(-\frac{1}{2}\left\Vert H\mu_{t}-c\right\Vert _{S_{t}^{M}}^{2}\right).
\]

A similar computation yields additional terms from \eqref{eq:moments-closed-loop},
expressed in terms of $g_{t}$.
\begin{eqnarray*}
\mathrm{E}_{P}\left[X_{t}\int\lambda_{t}\left(\theta,X_{t}\right)d\theta\right] & = & \int dx\int d\theta xP\left(x,t\right)\lambda_{t}\left(\theta,x\right)\\
 & \approx & \lambda^{0}\sqrt{\frac{\det F}{\left(2\pi\right)^{m+n}\det\Sigma_{t}}}\exp\left(-\frac{1}{2}\left\Vert H\mu_{t}-c\right\Vert _{S_{t}^{M}}^{2}\right)\\
 &  & \times\int dx\,x\exp\left(-\frac{1}{2}\left\Vert x-\mu_{t}^{M}\right\Vert _{Q_{t}+H^{T}MH}^{2}\right)\\
 &  & \times\int d\theta\exp\left(-\frac{1}{2}\left\Vert \theta-\left(F+R\right)^{-1}\left(Fc+RHx\right)\right\Vert _{F+R}^{2}\right)\\
 & = & \lambda^{0}\sqrt{\frac{\det F}{\left(2\pi\right)^{n}\det\Sigma_{t}\det\left(F+R\right)}}\exp\left(-\frac{1}{2}\left\Vert H\mu_{t}-c\right\Vert _{S_{t}^{M}}^{2}\right)\\
 &  & \times\int dx\,x\exp\left(-\frac{1}{2}\left\Vert x-\mu_{t}^{M}\right\Vert _{Q_{t}+H^{T}MH}^{2}\right)\\
 & = & \lambda^{0}\sqrt{\frac{\det F}{\det\Sigma_{t}\det\left(Q_{t}+H^{T}MH\right)\det\left(F+R\right)}}\exp\left(-\frac{1}{2}\left\Vert H\mu_{t}-c\right\Vert _{S_{t}^{M}}^{2}\right)\mu_{t}^{M}\\
 & = & g_{t}\mu_{t}^{M},
\end{eqnarray*}

\begin{eqnarray*}
\mathrm{E}_{P}\left[\tilde{X}_{t}\tilde{X}_{t}^{T}\int\lambda_{t}\left(\theta,X_{t}\right)d\theta\right] & = & \int dx\int d\theta P\left(x,t\right)\left(x-\mu_{t}\right)\left(x-\mu_{t}\right)^{T}\lambda_{t}\left(\theta,x\right)d\theta\\
 & \approx & \lambda^{0}\sqrt{\frac{\det F}{\left(2\pi\right)^{m+n}\det\Sigma_{t}}}\exp\left(-\frac{1}{2}\left\Vert H\mu_{t}-c\right\Vert _{S_{t}^{M}}^{2}\right)\\
 &  & \times\int dx\,\left(x-\mu_{t}\right)\left(x-\mu_{t}\right)^{T}\exp\left(-\frac{1}{2}\left\Vert x-\mu_{t}^{M}\right\Vert _{Q_{t}+H^{T}MH}^{2}\right)\\
 &  & \times\int d\theta\exp\left(-\frac{1}{2}\left\Vert \theta-\left(F+R\right)^{-1}\left(Fc+RHx\right)\right\Vert _{F+R}^{2}\right)\\
 & = & \lambda^{0}\sqrt{\frac{\det F}{\left(2\pi\right)^{n}\det\Sigma_{t}\det\left(F+R\right)}}\exp\left(-\frac{1}{2}\left\Vert H\mu_{t}-c\right\Vert _{S_{t}^{M}}^{2}\right)\\
 &  & \times\int dx\,\left(x-\mu_{t}\right)\left(x-\mu_{t}\right)^{T}\exp\left(-\frac{1}{2}\left\Vert x-\mu_{t}^{M}\right\Vert _{Q_{t}+H^{T}MH}^{2}\right)\\
 & = & \lambda^{0}\sqrt{\frac{\det F}{\det\Sigma_{t}\det\left(Q_{t}+H^{T}MH\right)\det\left(F+R\right)}}\\
 &  & \times\exp\left(-\frac{1}{2}\left\Vert H\mu_{t}-c\right\Vert _{S_{t}^{M}}^{2}\right)\left[\left(Q_{t}+H^{T}MH\right)^{-1}+\left(\mu_{t}-\mu_{t}^{M}\right)\left(\mu_{t}-\mu_{t}^{M}\right)^{T}\right]\\
 & = & g_{t}\left[\left(Q_{t}+H^{T}MH\right)^{-1}+\left(\mu_{t}-\mu_{t}^{M}\right)\left(\mu_{t}-\mu_{t}^{M}\right)^{T}\right].
\end{eqnarray*}
Assuming $X$ has linear dynamics, substituting these results into
\eqref{eq:moments-closed-loop} yields the following filtering equations
between spikes (we abuse notation and use $\mu_{t},\Sigma_{t}$ to
refer to the ADF-approximate quantities from here on), 
\begin{eqnarray}
\frac{d\mu_{t}}{dt} & = & A\mu_{t}+B\left(U_{t}\right)+g_{t}\left(\mu_{t}-\mu_{t}^{M}\right),\nonumber \\
\frac{d\Sigma_{t}}{dt} & = & A\Sigma_{t}+\Sigma_{t}A+DD^{T}\nonumber \\
 &  & +g_{t}\left[\Sigma_{t}-\left(Q_{t}+H^{T}MH\right)^{-1}-\left(\mu_{t}-\mu_{t}^{M}\right)\left(\mu_{t}-\mu_{t}^{M}\right)^{T}\right].\label{eq:adf-intermediate}
\end{eqnarray}
We simplify this by computing
\begin{eqnarray*}
\mu_{t}-\mu_{t}^{M} & = & \mu_{t}-\left(Q_{t}+H^{T}MH\right)^{-1}\left(Q_{t}\mu_{t}+H^{T}MH\bar{c}\right)\\
 & = & \left(Q_{t}+H^{T}MH\right)^{-1}H^{T}MH\left(\mu_{t}-\bar{c}\right)\\
 & = & \Sigma_{t}Q_{t}^{M}\left(\mu_{t}-\bar{c}\right)\\
 & = & \Sigma_{t}H^{T}S_{t}^{M}\left(H\mu_{t}-c\right),\\
\Sigma_{t}-\left(Q_{t}+H^{T}MH\right)^{-1} & = & \Sigma_{t}\left(I-\left(I+H^{T}MH\Sigma_{t}\right)^{-1}\right)\\
 & = & \Sigma_{t}H^{T}\left(M^{-1}+H\Sigma_{t}H^{T}\right)^{-1}H\Sigma_{t}\\
 & = & \Sigma_{t}Q_{t}^{M}\Sigma_{t}\,,
\end{eqnarray*}
where we have used the Woodbury identity and \eqref{eq:QM-woodbury}.
Substituting into \eqref{eq:adf-intermediate} we obtain the form

\begin{eqnarray}
\frac{d\mu_{t}}{dt} & = & A\mu_{t}+B\left(U_{t}\right)+g_{t}\Sigma_{t}H^{T}S_{t}^{M}\left(H\mu_{t}-c\right)\nonumber \\
\frac{d\Sigma_{t}}{dt} & = & A\Sigma_{t}+\Sigma_{t}A+DD^{T}\nonumber \\
 &  & +g_{t}\left[\Sigma_{t}H^{T}S_{t}^{M}H\Sigma_{t}-\Sigma_{t}H^{T}S_{t}^{M}\left(H\mu_{t}-c\right)\left(H\mu_{t}-c\right)^{T}S_{t}^{M}H\Sigma_{t}\right].\label{eq:adf-inter-spike}
\end{eqnarray}

\subsubsection{Effect of spikes}

When a spike occurs at time $t$ in preferred location $\theta$,
the update according to \eqref{eq:rhodes-snyder-time-variant} is
\begin{eqnarray*}
P\left(x,t^{+}\right) & = & P\left(x,t^{-}\right)+P\left(x,t^{-}\right)\frac{\lambda_{t^{-}}\left(\theta,x\right)-\hat{\lambda}_{t^{-}}\left(\theta\right)}{\hat{\lambda}_{t^{-}}\left(\theta\right)}\\
 & = & P\left(x,t^{-}\right)\frac{\lambda_{t^{-}}\left(\theta,x\right)}{\hat{\lambda}_{t^{-}}\left(\theta\right)}\\
 & = & \frac{P\left(x,t^{-}\right)\lambda_{t^{-}}\left(\theta,x\right)}{\int P\left(x,t^{-}\right)\lambda_{t^{-}}\left(\theta,x\right)dx}\,.
\end{eqnarray*}
To compute this sum we note that $P\left(x,t\right)\lambda_{t}\left(\theta,x\right)$,
under the ADF approximation, may be written as a single Gaussian in
$x$,
\begin{eqnarray*}
P\left(x,t\right)\lambda_{t}\left(\theta,x\right)dx & \approx & \lambda^{0}\sqrt{\frac{\det F}{\left(2\pi\right)^{m+n}\det\Sigma_{t}}}\exp\left(-\frac{1}{2}\left\Vert x-\mu_{t}\right\Vert _{Q_{t}}^{2}-\frac{1}{2}\left\Vert \theta-c\right\Vert _{F}^{2}-\frac{1}{2}\left\Vert Hx-\theta\right\Vert _{R}^{2}\right)\\
 & = & \lambda^{0}\sqrt{\frac{\det F}{\left(2\pi\right)^{m+n}\det\Sigma_{t}}}\exp\left(-\frac{1}{2}\left\Vert \theta-c\right\Vert _{F}^{2}-\frac{1}{2}\left\Vert x-\mu_{t}\right\Vert _{Q_{t}}^{2}-\frac{1}{2}\left\Vert x-H_{r}^{-1}\theta\right\Vert _{H^{T}RH}^{2}\right)\\
 & = & C_{t}\left(\theta\right)\cdot\exp\left(-\frac{1}{2}\left\Vert x-\left(Q_{t}+H^{T}RH\right)^{-1}\left(Q_{t}\mu_{t}+H^{T}R\theta\right)\right\Vert _{Q_{t}+H^{T}RH}^{2}\right),
\end{eqnarray*}
where 
\begin{eqnarray*}
C_{t}\left(\theta\right) & = & \lambda^{0}\sqrt{\frac{\det F}{\left(2\pi\right)^{m+n}\det\Sigma_{t}}}\exp\left(-\frac{1}{2}\left\Vert \theta-c\right\Vert _{F}^{2}-\frac{1}{2}\left\Vert H_{r}^{-1}\theta-\mu_{t}\right\Vert _{Q_{t}^{R}}^{2}\right)\\
Q_{t}^{R} & \triangleq & Q_{t}\left(Q_{t}+H^{T}RH\right)^{-1}H^{T}RH=\left(I+H^{T}RH\Sigma_{t}\right)^{-1}H^{T}RH.
\end{eqnarray*}
Analogously to the computation for $Q_{t}^{M}$ above, we have $Q_{t}^{R}=H^{T}S_{t}^{R}H,$
where
\[
S_{t}^{R}\triangleq\left(R^{-1}+H\Sigma_{t}H^{T}\right)^{-1},
\]

Now $P\left(x,t^{+}\right)$ is given by the normalized Gaussian,
\begin{eqnarray*}
P\left(x,t^{+}\right) & = & \frac{P\left(x,t^{-}\right)\lambda_{t^{-}}\left(\theta,x\right)}{\int P\left(x,t^{-}\right)\lambda_{t^{-}}\left(\theta,x\right)dx}\\
 & = & \sqrt{\frac{\det\left(\Sigma_{t^{-}}^{-1}+H^{T}RH\right)}{\left(2\pi\right)^{n}}}\exp\left(-\frac{1}{2}\left\Vert x-\left(\Sigma_{t^{-}}+H^{T}RH\right)^{-1}\left(\Sigma_{t^{-}}^{-1}\mu_{t^{-}}+H^{T}R\theta\right)\right\Vert _{\Sigma_{t^{-}}^{-1}+H^{T}RH}^{2}\right)\\
 & = & \mathcal{N}\left(x,\left(\Sigma_{t^{-}}^{-1}+H^{T}RH\right)^{-1}\left(\Sigma_{t^{-}}^{-1}\mu_{t^{-}}+H^{T}R\theta\right),\left(\Sigma_{t^{-}}^{-1}+H^{T}RH\right)^{-1}\right),
\end{eqnarray*}
and the update is
\begin{eqnarray*}
\mu_{t^{+}} & = & \left(\Sigma_{t^{-}}^{-1}+H^{T}RH\right)^{-1}\left(\Sigma_{t^{-}}^{-1}\mu_{t^{-}}+H^{T}R\theta\right)\\
\Sigma_{t^{+}} & = & \left(\Sigma_{t^{-}}^{-1}+H^{T}RH\right)^{-1}.
\end{eqnarray*}
To incorporate these updates into the inter-spike SDE \eqref{eq:adf-inter-spike}
they can be cast in the form
\begin{eqnarray*}
\mu_{t^{+}} & = & \mu_{t^{-}}+\left(\Sigma_{t^{-}}^{-1}+H^{T}RH\right)^{-1}H^{T}RH\left(H_{r}^{-1}\theta-\mu_{t^{-}}\right)\\
 & = & \mu_{t^{-}}+\Sigma_{t^{-}}Q_{t^{-}}^{R}\left(H_{r}^{-1}\theta-\mu_{t^{-}}\right)\\
 & = & \mu_{t^{-}}+\Sigma_{t^{-}}H^{T}S_{t^{-}}^{R}\left(\theta-H\mu_{t^{-}}\right),\\
\Sigma_{t^{+}} & = & \Sigma_{t^{-}}-\left(\Sigma_{t^{-}}^{-1}+H^{T}RH\right)^{-1}H^{T}RH\Sigma_{t^{-}}\\
 & = & \Sigma_{t^{-}}-\Sigma_{t^{-}}Q_{t^{-}}^{R}\Sigma_{t^{-}}\\
 & = & \Sigma_{t^{-}}-\Sigma_{t^{-}}H^{T}S_{t^{-}}^{R}H\Sigma_{t^{-}}\,,
\end{eqnarray*}
giving the full filtering SDE
\begin{eqnarray}
d\mu_{t} & = & A\mu_{t}dt+B\left(U_{t}\right)dt+g_{t}\Sigma_{t}H^{T}S_{t}^{M}\left(H\mu_{t}-c\right)dt+\Sigma_{t^{-}}H^{T}S_{t^{-}}^{R}\int_{\theta\in\mathbb{R}^{m}}\left(\theta-H\mu_{t^{-}}\right)N\left(dt\times d\theta\right),\nonumber \\
d\Sigma_{t} & = & \left(A\Sigma_{t}+\Sigma_{t}A+DD^{T}+g_{t}\left[\Sigma_{t}H^{T}S_{t}^{M}H\Sigma_{t}-\Sigma_{t}H^{T}S_{t}^{M}\left(H\mu_{t}-c\right)\left(H\mu_{t}-c\right)^{T}S_{t}^{M}H\Sigma_{t}\right]\right)dt\nonumber \\
 &  & -\Sigma_{t^{-}}H^{T}S_{t^{-}}^{R}H\Sigma_{t^{-}}dN_{t}.\label{eq:adf-full}
\end{eqnarray}

\subsection{Non-linear dynamics}

In case of non-linear dynamics
\[
dX_{t}=\left(A\left(X_{t}\right)+B\left(U_{t}\right)\right)dt+D_{t}dW_{t}
\]
the ADF approximation may also be applied to the terms involving $A\left(X_{t}\right)$
in \eqref{eq:moments-closed-loop}. Assume $A^{\left(i\right)}$,
the $i$-th element of $A$, is given by a power series around $\mu_{t}$,
written in multi-index notation,
\[
A^{\left(i\right)}\left(x\right)=\sum_{\alpha}A_{\alpha}^{\left(i\right)}\left(\mu_{t}\right)\left(x-\mu_{t}\right)^{\alpha},
\]
where the sum is over all multi-indices $\alpha$. Then, assuming
the ADF approximation $X_{t}\sim\mathcal{N}\left(\mu_{t},\Sigma_{t}\right)$,
\[
\mathrm{E}_{P}\left[A^{\left(i\right)}\left(X_{t}\right)\right]=\sum_{\alpha}A_{\alpha}^{\left(i\right)}\left(\mu_{t}\right)\mathrm{E}_{\alpha}\left(\Sigma_{t}\right),
\]
where $\mathrm{E}_{\alpha}\left(\Sigma\right)$ is defined as $\mathrm{E}\left(Z^{\alpha}\right)=\mathrm{E}\prod_{k}Z_{k}^{\alpha_{k}}$
for $Z\sim\mathcal{N}\left(0,\Sigma\right)$, and may be computed
from Isserlis' theorem. Similarly,
\begin{eqnarray*}
\mathrm{E}_{P}\left[A\left(X_{t}\right)\tilde{X}_{t}^{T}\right]_{ij} & = & \mathrm{E}_{P}\left[A^{\left(i\right)}\left(X_{t}\right)\left(X_{t}^{\left(j\right)}-\mu_{t}^{\left(j\right)}\right)\right]\\
 & = & \sum_{\alpha}A_{\alpha}^{\left(i\right)}\left(\mu_{t}\right)\mathrm{E}_{\alpha+e_{j}}\left(\Sigma_{t}\right),
\end{eqnarray*}
where $e_{j}$ is $j$-th standard basis vector (the multi-index corresponding
to the single index $j$).

Writing $A_{\alpha}=\left(A_{\alpha}^{\left(1\right)},\ldots A_{\alpha}^{\left(n\right)}\right)^{T}$
and $\mathbf{E}_{\alpha,t}=\left(\mathrm{E}_{\alpha+e_{1}}\left(\Sigma_{t}\right),\ldots,\mathrm{E}_{\alpha+e_{n}}\left(\Sigma_{t}\right)\right)$
the filtering equations become
\begin{eqnarray*}
d\mu_{t} & = & \sum_{\alpha}A_{\alpha}\left(\mu_{t}\right)\mathrm{E}_{\alpha}\left(\Sigma_{t}\right)+B\left(U_{t}\right)dt+\\
 &  & g_{t}\Sigma_{t}Q_{t}^{RF}\left(\mu_{t}-\bar{c}\right)dt+\Sigma_{t^{-}}Q_{t^{-}}^{R}\int_{\theta\in\mathbb{R}^{n}}\left(H_{r}^{-1}\theta-\mu_{t^{-}}\right)N\left(dt\times d\theta\right)\\
d\Sigma_{t} & = & \left(\sum_{\alpha}\left(A_{\alpha}\left(\mu_{t}\right)\mathbf{E}_{\alpha,t}^{T}+\mathbf{E}_{\alpha,t}A_{\alpha}\left(\mu_{t}\right)^{T}\right)+DD^{T}+g_{t}\left[\Sigma_{t}Q_{t}^{RF}\Sigma_{t}-\Sigma_{t}Q_{t}^{RF}\left(\mu_{t}-\bar{c}\right)\left(\mu_{t}-\bar{c}\right)^{T}Q_{t}^{RF}\Sigma_{t}\right]\right)dt\\
 &  & -\Sigma_{t^{-}}Q_{t^{-}}^{R}\Sigma_{t^{-}}dN_{t}.
\end{eqnarray*}
Analogous comments apply when the noise gain $D_{t}$ is a non-linear
function $D\left(X_{t}\right)$, provided each element $\left[D\left(x\right)D\left(x\right)^{T}\right]_{ij}$
may be expanded as a power series.

\subsection{Comparison of estimated posterior variance and MSE}

In the main text, we studied optimal encoding using the posterior
variance as a proxy for the MSE. Letting $\mu_{t},\Sigma_{t}$ denote
the approximate posterior moments given by the filter, the MSE and
posterior variance are related as follows,
\begin{eqnarray*}
\mathrm{MSE}_{t} & \triangleq & \mathrm{E}\left[\mathrm{tr}\left(X_{t}-\mu_{t}\right)\left(X_{t}-\mu_{t}\right)^{T}\right]=\mathrm{E}\mathrm{E}_{P}^{t}\mathrm{tr}\left(X_{t}-\mu_{t}\right)\left(X_{t}-\mu_{t}\right)^{T}\\
 & = & \mathrm{E}\left[\mathrm{tr}\left(\mathrm{Var}_{P}^{t}X_{t}\right)\right]+\mathrm{E}\left[\mathrm{tr}\left(\mu_{t}-\mathrm{E}_{P}^{t}X_{t}\right)\left(\mu_{t}-\mathrm{E}_{P}^{t}X_{t}\right)^{T}\right],
\end{eqnarray*}
where $\mathrm{E}_{P}^{t}\left[\cdot\right],\mathrm{Var}_{P}^{t}\left[\cdot\right]$
are resp. the mean and covariance conditioned on $\mathcal{N}_{t}$,
and $\mathrm{tr}$ is the trace operator. Thus for an exact filter,
having $\mu_{t}=\mathrm{E}_{P}^{t}X_{t},\Sigma_{t}=\mathrm{Cov}_{P}^{t}X_{t}$,
we would have $\mathrm{MSE}_{t}=\mathrm{trace[E}(\Sigma_{t})]$. Conversely,
if we find that $\mathrm{MSE}_{t}\approx\mathrm{trace[E}\Sigma_{t}]$,
it suggests that the errors are small (though this is not guaranteed,
since the errors in $\mu_{t}$ and $\Sigma_{t}$ may effect the MSE
in opposite directions, if the variance is underestimated).

Figure \ref{variance-mse} shows the variance and MSE in estimating
a linear one-dimensional process, after averaging across 1000 trials.
Although the posterior variance is, on average, overestimated at the
start of trials, in the steady state it approximates the square error
reasonably well.

\begin{figure}
\centering{}\includegraphics[width=0.8\textwidth]{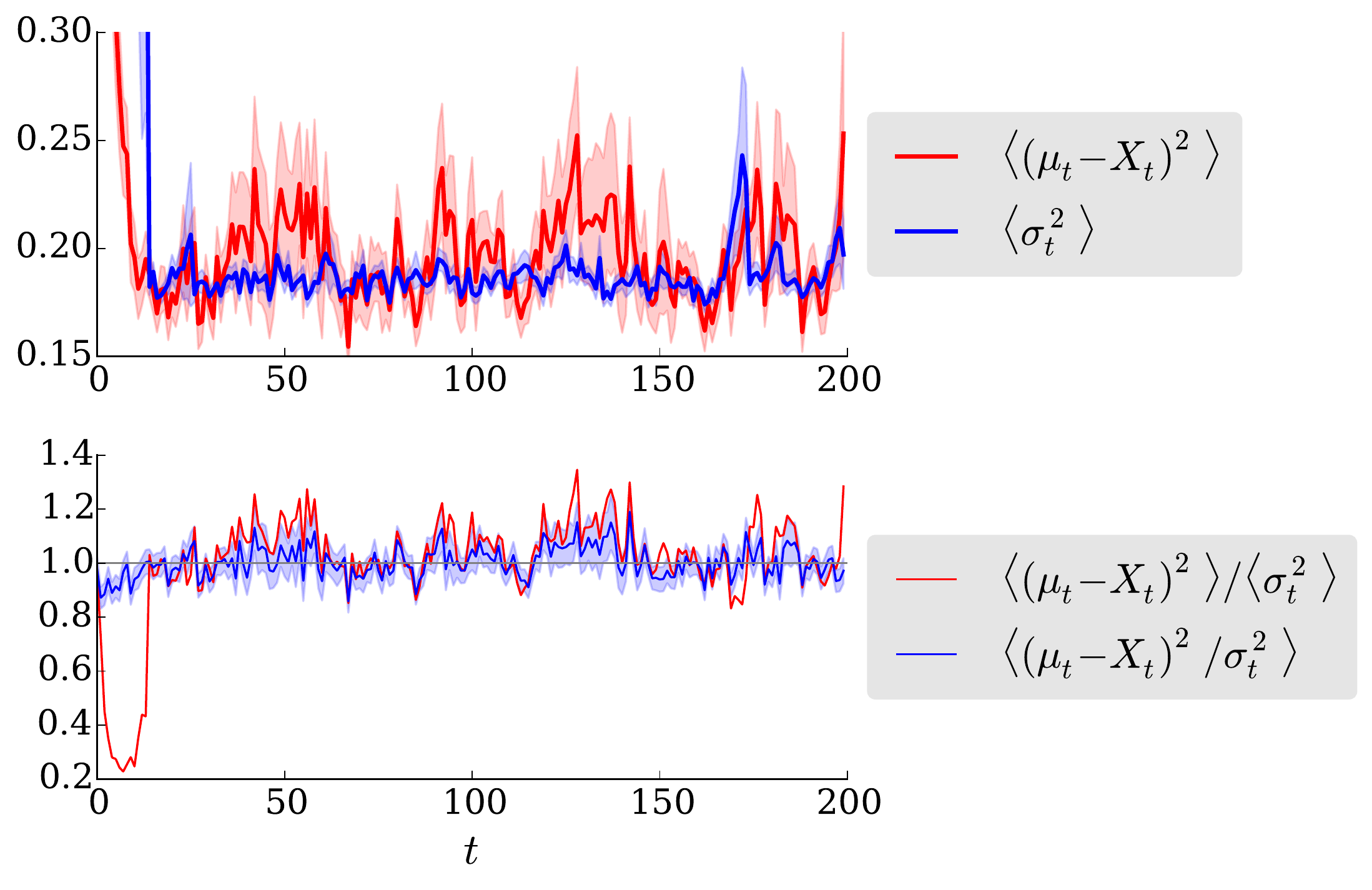}\protect\caption{Posterior variance vs. MSE when filtering a one-dimensional process
$dX_{t}=-0.1X_{t}dt+0.5dW_{t}$ (the steady-state variance of this
process is $\sigma_{0}^{2}=1.25$). The top plot shows the MSE and
mean posterior variance. The bottom plot shows the ratio of means
$\mathrm{MSE}/\left\langle \Sigma_{t}\right\rangle $ and the mean
ratio $\left\langle \mathrm{SE}/\Sigma_{t}\right\rangle $ where SE
is the squared error $\left(\mu_{t}-X_{t}\right)^{2}$. Sensory parameters
are $c=0,\sigma_{\mathrm{pop}}^{2}=0.1,\sigma_{\mathrm{tc}}^{2}=0.01,\lambda^{0}=10$.
The means were taken across 1000 trials. Shaded areas indicate error
estimates obtained as sample standard deviation divided by square
root of number of trials.}
\label{variance-mse}
\end{figure}

We also show here variants of the Figures \ref{optimal-c} and \ref{harper}
(Figures \ref{optimal-c-mse} and \ref{harper-mse}, respectively),
showing the MSE rather than the variance. The results look similar
but noisier, except in Figure \ref{mse-wide} for small population
variance, where the ADF estimation is poor due to very few spikes
occurring.

\begin{figure}
\subfloat[]{\protect\includegraphics[width=0.6\textwidth]{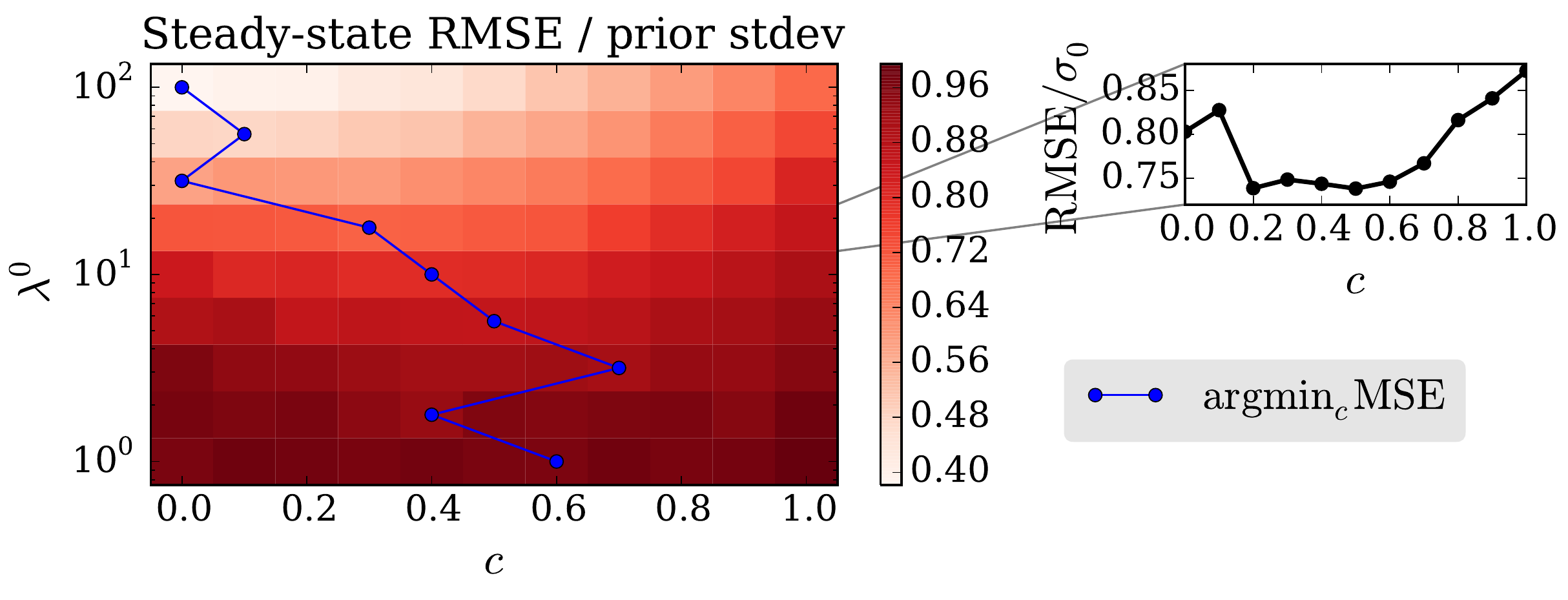}\label{mse-c-rate}

}\subfloat[]{\protect\includegraphics[width=0.35\textwidth]{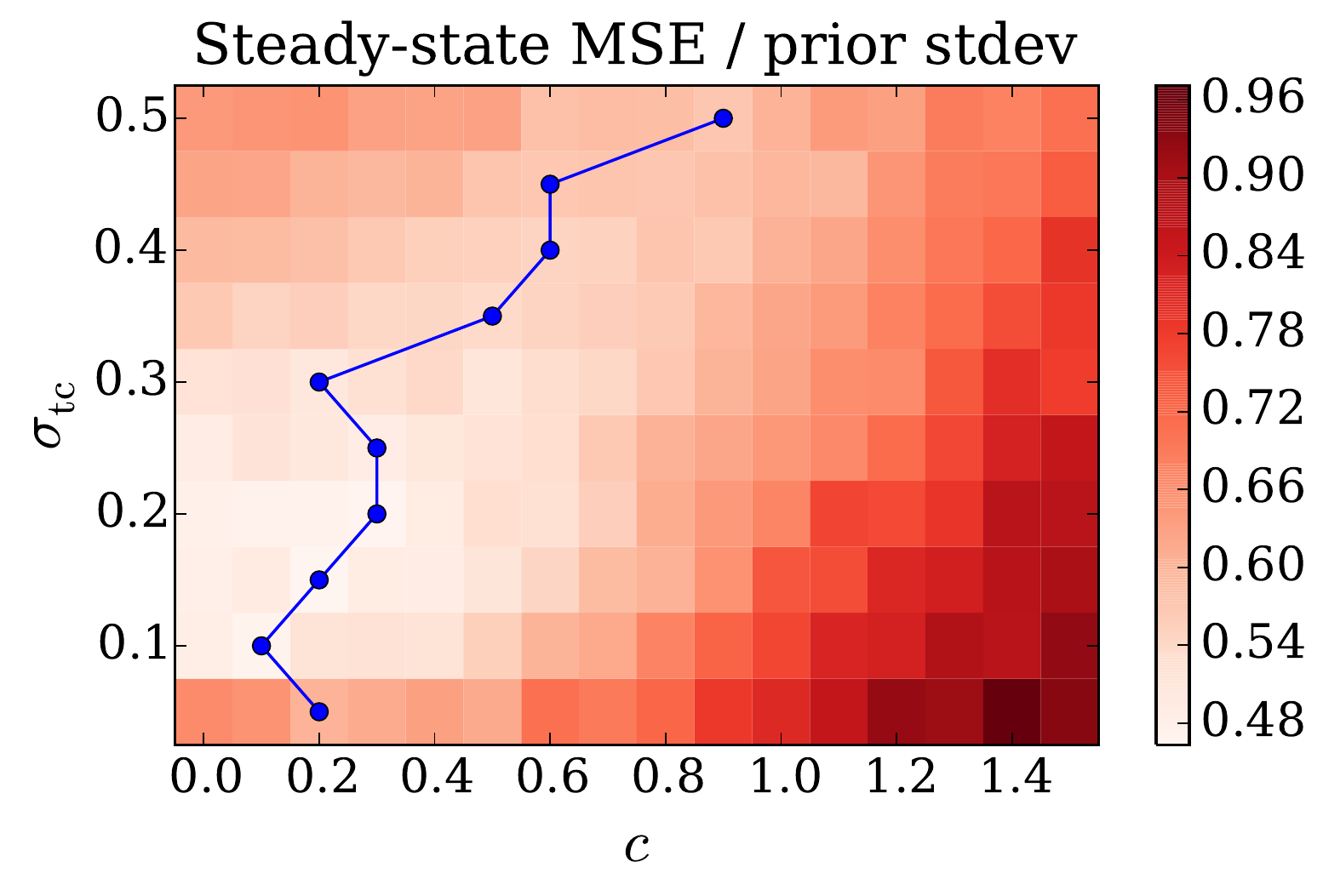}\label{mse-c-tc}
}\protect\caption{Mean Square Error as a function of model parameters. This figure is
based on the same data as Figure \ref{optimal-c}, with Root Mean
Square Error (RMSE) plotted instead of estimated posterior variance.
See Figure \ref{optimal-c} for more details.}
\label{optimal-c-mse}
\end{figure}

\begin{figure}
\subfloat[]{\protect\includegraphics[width=0.4\textwidth]{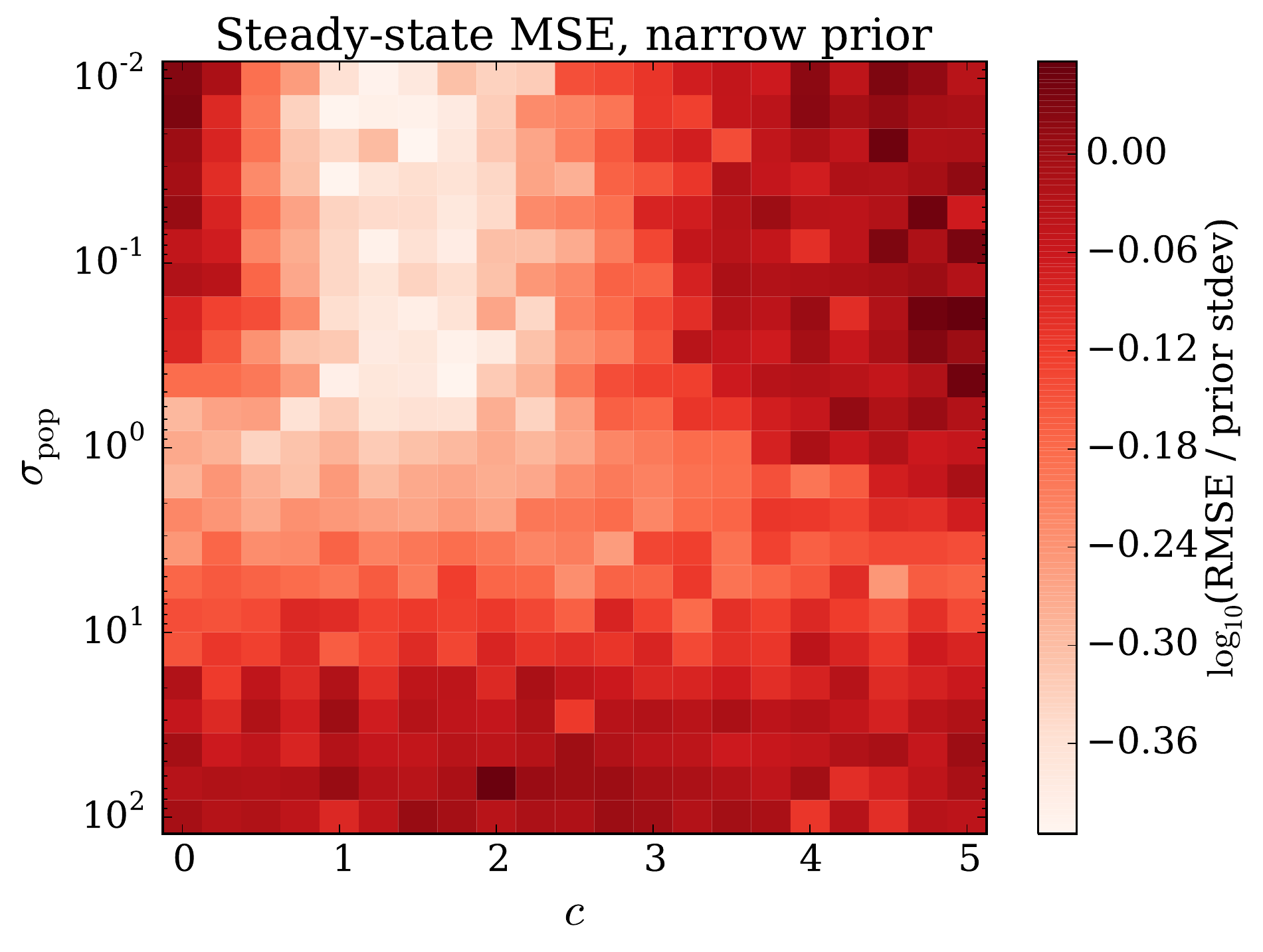}\label{mse-narrow}}\subfloat[]{\protect\includegraphics[width=0.4\textwidth]{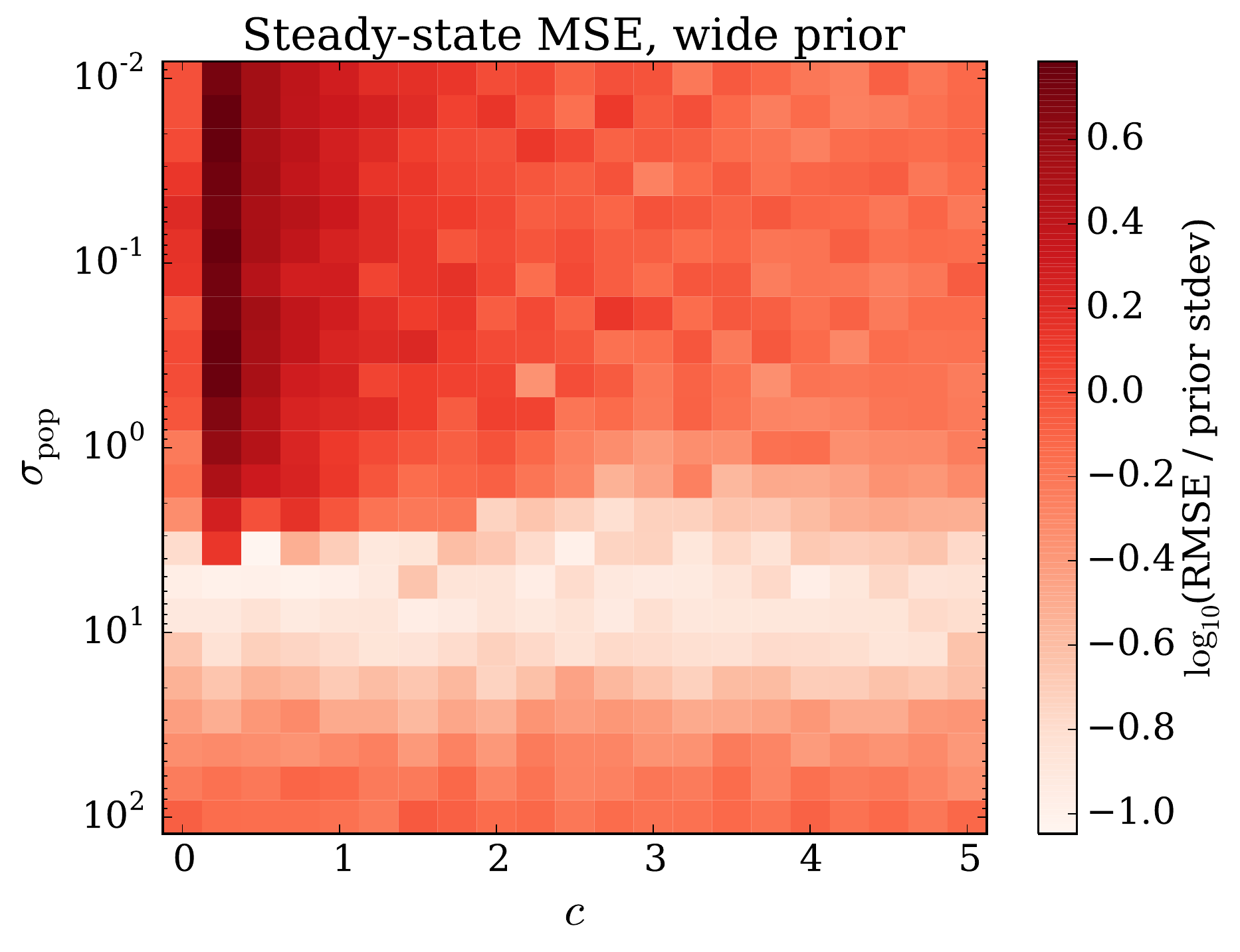}\label{mse-wide}}\protect\caption{Optimal population distribution depends on prior variance relative
to tuning curve width. This figure is based on the same data as Figure
\ref{harper}, with MSE plotted instead of estimated posterior variance.
See Figure \ref{harper} for more details.}
\label{harper-mse}
\end{figure}

{\small\bibliographystyle{unsrt}
\bibliography{ADF_NIPS}
}
\end{document}